\newtheorem{theorem}{Theorem}
\newtheorem{lemma}[theorem]{Lemma}
\newtheorem{corollary}[theorem]{Corollary}
\newtheorem{proposition}[theorem]{Proposition}
\newtheorem{remark}[theorem]{Remark}
\newcommand{\R}{\mathbb{R}}
\renewcommand{\P}{\mathbb{P}}
\newcommand{\E}{\mathbb{E}\,}
\newcommand{\X}{\mathbf{X}}
\newcommand{\Y}{\mathbf{Y}}
\newcommand{\e}{\mathrm{e}}
\newcommand*\circled[1]{\tikz[baseline=(char.base)]{
   \node[shape=circle,draw,inner sep=1pt] (char) {\footnotesize #1};}}
\newcommand{\autocorr}{R} 
\newcommand{\Autocorr}{\mathbf{R}} 
\newcommand{\transpose}{\top}
\newcommand{\trans}{\transpose}
\newcommand{\covmat}{\boldsymbol{\Sigma}}
\newcommand{\sketch}{\boldsymbol{\Omega}}
\newcommand{\be}{\mathbf{e}}
\newcommand{\bx}{\mathbf{x}}
\newcommand{\bv}{\mathbf{v}} 
\newcommand{\bw}{\mathbf{w}}
\newcommand{\bH}{\mathbf{H}}
\newcommand{\bD}{\mathbf{D}}
\newcommand{\bP}{\mathbf{P}}
\newcommand{\eps}{\varepsilon}
\renewcommand{\epsilon}{\varepsilon}
\newcommand{\longT}{T_\text{long}}
\newcommand{\Var}{\mathbb{V}\text{ar}}
\newcommand{\del}[1]{
  \@bsphack
  \@esphack
}
\let\remove\del           
\let\rem\del              
\newcommand{\new}[1]{#1}  
\let\add\new              
\newenvironment{newstuff}{\ignorespaces}{\ignorespacesafterend}
\begin{document}
\title{Spectral estimation from simulations via sketching\footnote{Published in \emph{J.\ Comp. Phys.} Dec 2021, \href{https://doi.org/10.1016/j.jcp.2021.110686}{doi.org/10.1016/j.jcp.2021.110686}}}

\author{Zhishen Huang and Stephen Becker\footnote{Department of Applied Mathematics, University of Colorado Boulder, Boulder, CO, 80309}}

\maketitle

\begin{abstract} 
Sketching is a stochastic dimension reduction method that preserves geometric structures of data and has applications in high-dimensional regression, low rank approximation and graph sparsification. In this
work, we show that sketching can be used to compress simulation data and still  accurately estimate time autocorrelation and power spectral density. For a given compression ratio, the accuracy is much higher than using previously known methods.
In addition to providing theoretical guarantees, 
we apply sketching to a molecular dynamics simulation of methanol and find that
the estimate of spectral density is 90\% accurate using only 10\% of the data.
\end{abstract}

Large-scale computer simulations are a common tool in many disciplines like 
astrophysics, cosmology, fluid dynamics, computational chemistry, meteorology and oceanography, to name just a few.
 In many of these fields, a key goal of the simulation is an estimate of the power spectral density (or equivalently autocorrelation) of some dynamic or thermodynamic state variable or derived function. 
 
Computing a full autocorrelation becomes prohibitively expensive for large-scale simulations since it requires storing the entire dataset in memory.
 The textbook strategy to combat this problem is to subsample in time, often with clever logarithmic or multi-level spacing strategies~\cite{FRENKEL_book}.  Other simple solutions subsample particles or grid points, or both time and particles/points. Unfortunately, these {\it ad hoc} methods lack rigorous performance guarantees and can have arbitrarily large error.  
 This article shows how to leverage results from the new field of {\it randomized linear algebra} to derive subsampling methods that work better in practice and have theoretical guarantees on the accuracy.
 These new subsampling methods, known as {\it sketching} methods, essentially exploit the fact that 
 \del{when}
 multiplying by a multivariate Gaussian to do compression 
 \new{ensures}
 \del{there are}
 no worst-case inputs; in comparison, simple subsampling methods do well on some inputs but catastrophically bad on other inputs.  Section 
\ref{sec:sketching} gives a toy example of this, and the rest of the paper shows how this applies to sampling data for spectral estimation. 

\begin{newstuff}
\paragraph{Contributions} This paper shows how to use existing results from  randomized linear algebra results in the context of estimating autocorrelations and power spectral densities. Specifically, we
\begin{enumerate}
    \item show that the autocorrelation and power spectral density are simple functions of the covariance matrix; 
    \item convert existing results on covariance matrix estimation to results on estimating autocorrelation and power spectral density; and 
    \item numerically demonstrate that the resulting sketching methods are significantly more accurate than baseline methods when applied to the problem of autocorrelation and power spectral density estimation in a typical molecular dynamic simulation.
\end{enumerate}
\end{newstuff}

Throughout the paper, we pay attention to computation and communication costs. In particular, the sketches are linear operators and can be applied to a data stream, so they can be applied during a simulation with negligible memory overhead and in a reasonable time. Our methods are also simple to implement. Indeed, a reason that more sophisticated sampling schemes are not used in practice may be due to the cumbersome book-keeping required for normalizations, but we review a simple trick to deal with this (Remark~\ref{rmk:normalization}), and other than sampling, our methods do not require any ``on-the-fly'' computation, as the estimates are formed in post-processing.

\paragraph{Background}
Spectral estimation arises  
 in molecular dynamic (MD) simulations based on time-dependent density functional theory (TDDFT) \cite{TDDFT1984}, which is a prominent methodology for electronic structure calculations. 
Depending on the original variable (position, velocity, dipole-moment, etc.), 
applications of spectral estimation in TDDFT
include calculating 
 vibrational or rotational modes (as used in infrared and Raman spectroscopy)~\cite{scott1996harmonic}, optical absorption spectra~\cite{OpticalAbsorption96},
and circular dichroism spectra~\cite{CircularDichroism09}.
Many of these quantities can be experimentally measured, so \del{one use of} the spectrum \add{can be used} \del{is} to verify that the simulation matches with reality, \add{as well as predicting} \del{or to predict} properties of novel materials.

Similarly, temporal autocorrelations may be computed during numerical solutions of partial differential equations (PDEs). For one example, in fluid dynamics, the autocorrelations computed via direct numerical simulation 
of the Navier-Stokes equations can be used to validate large-eddy simulation 
models~\cite{DNS_autocorr}. Another example is oceanography where modern simulation codes rely on multi-scale numerical methods that cannot fully resolve the smallest scales, and so use stochastic models to inform the simulation~\cite{grooms2013efficient,grooms2019diagnosing}. The stochastic process can be constrained to conform to a given autocorrelation function.

MD simulations operate on particles, while standard numerical methods for PDEs operate on (possibly unstructured) grids and elements. In both cases, the exact sample time-autocorrelation function can be computed provided the data (particles or grid points, at all times) is stored. Due to advances in computing power and algorithm design, it is now feasible to run extremely large simulations. A consequence of this is that many large-scale simulations generate more data than can be stored. 
As an example, running the billion-atom Lennard Jones benchmark on the MD \texttt{LAMMPS} software~\cite{LAMMPS} for the equivalent of 1 ns of simulation time on argon atoms~\cite{rapaport2004art} takes 4.9 hours on a 288 node GPU computer from 2012~\cite{LAMMPS_billion}, making it a modest large-scale computation. Storing the 6 coordinates of position and velocity in double precision for the $10^5$ timesteps would require 4.26 PB, well beyond a typical high-end cluster disk quota of 150 TB. Longer simulations, or simulations of molecules, only exacerbate the problem. Standard compression methods for scientific data, like \texttt{fpzip}~\cite{lindstrom2006fast} and \texttt{ZFP}~\cite{lindstrom2014fixed}, improve this by one or two orders of magnitude at best~\cite{salloum2018optimal}.

\section{Sketching}  \label{sec:sketching}

Sketching is used to reduce dimensionality from $N$ dimensions to some $m \ll N$. 
A family of sketches is a probability distribution on the set of real or complex $m \times N$ matrices such that if $\sketch$ is drawn from this family,  for any fixed vectors $\bv,\bw\in \R^N$, then $\|\sketch \bv - \sketch \bw\|_2 \approx \|\bv - \bw \|_2$ with high probability. Hence the sketch preserves distances, and by the  polarization formula, preserves inner products as well.
The core ideas behind sketching have been in place since the 1980s, and were well-known in theoretical computer science literature, but the field has expanded since 2005 as many applications in scientific computing were developed. In particular, sketching is often used to efficiently find solutions of large least-square regression problems~\cite{Clarkson_leastsqrt_13,Clarkson_l1regression_2005,Meng_Mahoney_13,Sohler_Woodruff_2011,Woodruff_Zhang_2013,Clarkson2012f}, and to determine the row and column space of large matrices for low-rank matrix decomposition \cite{halko2011finding,Drineas_CUR_08,Mahoney_Drineas_PNAS}.

Formally, a probability distribution on $m \times N$ matrices is a {\it Johnson-Lindenstrauss Transform} with parameters $\varepsilon,\delta$ and $d$ if for any fixed set of $d$ vectors $\{\bv_i\}_{i=1}^d \subset \R^N$, if $\sketch$ is drawn from this distribution, then with probability at least $1-\delta$ it holds that 
\[
(1-\eps)\|\bv_i - \bv_j\|_2^2 \le \|\sketch\bv_i - \sketch\bv_j\|_2^2 \le (1+\eps)\|\bv_i - \bv_j\|_2^2
\]
for all $i,j\in\{1,\ldots,d\}$. 
When no confusion arises, it is common to not distinguish between the random variable and the distribution, and write $\sketch \in \text{JLT}(\eps,\delta,d)$ to encode the notion.  The name Johnson-Lindenstrauss Transform honors Johnson and Lindenstrauss' well-known result which shows that such distributions exist for $m = \mathcal{O}(\epsilon^{-2}\log(d))$~\cite{johnson1984extensions}.

\paragraph{Intuition}
\del{The classic example of a sketch is an appropriately scaled Gaussian matrix with independent entries.} To gain insight, consider the case when $\sketch \in \R^{1 \times N}$ is a sketch that compresses $\bv\in \R^N$ to a single number, and without loss of generality, let $\|\bv\|_2=1$. All sketches we consider will be unbiased, meaning $\E \sketch^T\sketch = I_{N \times N}$ where $I$ is the identity matrix. We wish to preserve norm, so we look at $\| \sketch \bv \|_2^2$, or equivalently $(\sketch \bv)^2$ when $m=1$. Then any unbiased sketch has $\E (\sketch \bv)^2 = 1$. 

\new{A natural approach to reducing dimension is simple subsampling, meaning that each entry has an equal chance of being selected.}
Simple subsampling can be written as a sketch by defining $\sketch = \sqrt{N} \be_i^\trans$ where $\be_i$ is the $i^\text{th}$ canonical basis vector in $\R^N$, and $i$ is chosen uniformly from $\{1,\ldots,N\}$; one can easily show this is unbiased. \new{In the lucky event that} the input $\bv$ 
has weight evenly distributed over all coordinates, such that $|v_j|=N^{-1/2}$ for all $j=1,\ldots,N$, then this is a good sketch, since the variance is $\Var( (\sketch \bv )^2 ) = 0$. However, if the input is $\bv = \be_k$ for any fixed $k$, then an elementary calculation shows that $\Var( (\sketch \bv )^2 ) = N-1$, which in high dimensions is too large to be useful. 

In contrast, \new{the classic example of a \emph{good} sketch is an appropriately scaled Gaussian matrix with independent entries. For this sketch, 
define} $\sketch$ as $1 \times N$ independent standard normal random variables, then $\sketch$ is also an unbiased sketch, and furthermore $\Var( (\sketch \bv )^2 ) = 2$ independent of the fixed vector $\bv$. \new{In contrast, the variance of the simple subsampling sketch ranges between $[0,N-1]$ depending on $\bv$}.
The Gaussian sketch is not always more efficient than the subsampling sketch, but it is never much worse, and sometimes it is better by a factor of $N$.

\paragraph{Types of sketches}
In this work we consider the following three types of distributions of sketching matrices $\sketch$ (Matlab code available via \cite{StephensCode};
some Python implementations are part of the \texttt{random\_projection} module of scikit learn):
\begin{description}
    \item[Gaussian sketch]  Each entry of $\sketch$ is independently drawn from 
    the scaled normal distribution $\mathcal{N}(0,\frac{1}{m})$. 
    
    \item[Haar sketch] Draw $\widetilde{\sketch}$ as in the Gaussian case and then define the rows of $\sketch$ to be the output of Gram-Schmidt orthogonalization applied to the rows of $\widetilde{\sketch}$, scaled by $\sqrt{\frac{N}{m}}$. This is equivalent to sampling the first $m$ columns of a matrix from the Haar distribution on orthogonal matrices, and can also be computed via the \texttt{QR} factorization algorithm with post-processing~\cite{HaarMezzadri}. This is essentially the case originally considered by Johnson and Lindenstrauss.
    
    \item[FJLT] The Fast Johnson-Lindenstrauss Transformation (FJLT) as is usually implemented~\cite{Woodruff_Now} is a structured matrix of the form $\sketch = \sqrt{\frac{N}{m}}\bP^\trans \bH \bD$
    where $\bD$ is a diagonal matrix with Rademacher random variables on the diagonal (i.e., independent, $\pm 1$ with equal probablity), $\bH$ is a unitary or orthogonal matrix, and $\bP^\trans$ a simple subsampling matrix such that $\bP^\trans \bv$ chooses $m$ of the coordinates from $\bv$ uniformly at random (with replacement), so that $\bP$ consists of $m$ canonical basis vectors. To be useful, each entry of $\bH$ should be as small as possible ($\approx 1/\sqrt{N}$), and $\bH$ should be computationally fast to apply to vector. Standard choices for $\bH$ are the (Walsh-)Hadamard, discrete Fourier, and discrete Cosine transforms, all of which have fast implementations that take $\mathcal{O}(N \log N)$ flops to apply to a vector. Since applying $\bD$ and $\bP^\trans$ take linear and sub-linear time, respectively, the cost of computing $\sketch\bv$ is $\mathcal{O}(N\log N)$, better than the $\mathcal{O}(Nm)$ cost of the Gaussian and Haar sketches. 
    The original FJLT proposed in \cite{FastJL} is a slight variant that uses a different sparse matrix $\bP$.
    
\end{description}

There are other types of sketches such as the count-sketch~\cite{Cormode_sketch_techniques}, leverage-score based sketches \cite{MahoneyMonograph}, and entry-wise sampling~\cite{AchlioptasMcSherry,AKL2013Sampling} which can be combined with preconditioning~\cite{FarhadPrecond}. Some of these sketches are not Johnson-Lindenstrauss transforms but are instead the related notion of subspace embeddings.  See \cite{Woodruff_Now,MahoneyMonograph,martinsson2020randomized} for surveys on sketching literature.

\paragraph{Guarantees}

Table~\ref{table::JLT} summarizes the required compressed dimension size $m$ for the corresponding sketching matrix to be a JLT($\varepsilon,\delta,d$). 
\begin{table}[H]
\centering
\begin{tabular}{ lc } 
\toprule
Method & Compressed dimension $m$\\
\midrule
Gaussian \cite{Woodruff_Now} & $\mathcal{O}(\varepsilon^{-2}\log (d/\delta))$ \\
Haar \cite{vershynin_book}  & $\mathcal{O}(\varepsilon^{-2}\log (d/\delta))$ \\
FJLT \new{\cite{RandLA_ModelReduction2019},[Prop.~3.9]}& 
\new{$\mathcal{O}\left(
\varepsilon^{-2}\log\left(Nd/\delta\right)\log\left(d/\delta\right) \right)$}\\
\bottomrule
\end{tabular}
\caption{Compressed dimension requirement for JLTs.}
\label{table::JLT}
\end{table}

\begin{newstuff}
The result for the FJLT holds when $\bH$ is a Hadamard matrix, and follows from the observation that a subspace embedding with complexity that depends only logarithmically on the failure probability $\delta$ can be turned into a JLT using the union bound. When $\bH$ is a discrete Fourier or discrete Cosine transform, similar $\mathcal{O}(\varepsilon^{-2})$ sample complexities hold (with polylog factors in $d$, $N$ and $\delta^{-1}$) by combining \cite[Thm.~3.1]{RIP_2_JLT} with \cite[Thm.~12.31]{FoucartRauhut_CS_Book}.
\end{newstuff}
\del{
The result for the FJLT, which holds when $\bH$ is a  Hadamard, discrete Fourier or discrete Cosine transform, is not explicitly in the literature but follows by combining \cite[Thm.~3.1]{RIP_2_JLT} with \cite[Thm.~12.31]{FoucartRauhut_CS_Book}.}
The constants hidden in the asymptotic notation are not bad. For example, for the Gaussian sketch, with $d=10^3$ points (in arbitrary dimension \new{$N$}), for failure probability $\delta \le 0.1$ and error $\eps \le 1/3$, the number of samples required is $m\ge 535$. 

\section{Approximating Autocorrelation with Sketching} \label{sec:autocorr}
\newcommand{\signal}{x}
\newcommand{\spc}{\varphi} 
Throughout the article, we think of the data as a signal $\signal(t,\spc)$ in time $t$ and space $\spc$\xspace, where $\spc$ can encode a grid location or a particle number depending on the type of simulation (for space indices in dimension greater than one, we flatten the indices into a large one-dimensional list). Let $t$ have unit spacing $\Delta T=1$, $t\in\{1,2,\ldots,T\}$, and let space be indexed by $\{\spc_1,\ldots,\spc_N\}$. We organize the data into a matrix $\X\in\R^{T\times N}$.

In what follows, we consider classical methods for estimating the autocorrelation.  There are powerful alternative methods, based on parametric models --- most notably, autoregressive-moving-average (ARMA) models~\cite{broersen2006automatic}. However, these methods excel when $T$ is small, \rem{and} do not clearly extend to $N>1$, and are not natively suited to on-the-fly calculations during a simulation as they require significant post-processing and parameter tuning.

\paragraph{Autocorrelation and the Wiener-Khinchin Theorem}

For a continuous signal $\signal$, the time autocorrelation function of lag $\tau$ of signal $\signal$ is 
\[
R(\tau) = \E_{\spc} \lim_{T\to\infty}\frac{1}{2T}\int_{-T}^{T} \signal(t,\spc)\signal(t+\tau,\spc) \,\mathrm{d}t.
\]
For the corresponding discretized signal of length $T$, the (sample) time autocorrelation of lag $\tau$ is defined as  
\begin{align}
    \label{eqn::autocorr_defn}
    \widehat{\autocorr}_\tau[{\X}] 
    &= \frac{1}{N}\frac{1}{T-\tau} \sum_{t=1}^{T-\tau} \sum_{i=1}^{N} \signal(t,\spc_i) \signal(t+\tau,\spc_i)
\end{align}
\new{where we change notation slightly to emphasize that this is a function of the data $\X$.}
As our goal will be to approximate the sample autocorrelation $\widehat{\autocorr}_\tau$, we drop the $\widehat{\phantom{\autocorr}}$ notation for clarity and simply write $\autocorr_\tau$.

\begin{remark}[Cross-terms]
Calculating Eq.~\ref{eqn::autocorr_defn} requires storing $N\times T$ parameters.  If one instead computed $ \sum_{t=1}^{T-\tau} \left(\sum_{i=1}^{N} \signal(t,\spc_i)\right)\left( \sum_{i=1}^{N}\signal(t+\tau,\spc_i)\right)$ (with appropriate normalization), then only $\mathcal{O}(T)$ storage is required, but unfortunately this is not equivalent to Eq.~\ref{eqn::autocorr_defn} due to the presence of the cross-terms.  One way to view sketching methods is that the sketching adds in suitable randomness so that when using the $\mathcal{O}(T)$ formula, the cross-terms vanish in expectation.
\end{remark}

Letting the shifted, unnormalized (sample) covariance matrix be $\covmat = \X\X^\trans$, our first observation is that $\autocorr_\tau$ is a linear function of $\covmat$, since
\[
(\covmat)_{t,t'} = \sum_{i=1}^N \signal(t,\spc_i)\signal(t',\spc_i)
\]
so $\autocorr_\tau$ is the scaled sum of the $\tau^\text{th}$ diagonal of $\covmat$,
and hence we use the notation $\autocorr_\tau[\covmat]$, and also 
write
$\Autocorr[\covmat] = ({\autocorr}_0[\covmat],{\autocorr}_1[\covmat],\cdots,{\autocorr}_{T-1}[\covmat])^\trans$ when working with all $T$ possible lags.

The time autocorrelation is often of interest itself, but it can also be used to derive the power spectral density,
\[
S(\omega) = \lim_{T\to\infty}\E_\spc \left|\frac{1}{\sqrt{2T}}\int_{-T}^T \signal(t,\spc) \e^{-\mathrm{i}\omega t} \,\mathrm{d}t\right|^2.
\]
If $\signal$ is a wide-sense stationary random process, under certain conditions, the Wiener-Khinchin Theorem states that the spectral density is the Fourier transform of $R(\tau)$, and the discrete power spectral density can be estimated by the discrete Fourier transform of $\Autocorr$.

Thus both autocorrelation and power spectrum can be reduced to the problem of finding an accurate estimate of $\covmat$. Note that $\covmat$ is a $T\times T$ matrix \del{and} \new{that is} impractical to store, and is used only for analysis. Our actual software implementation only needs a factored form  $\covmat = \widehat{\X}\widehat{\X}^\trans$ for $\widehat{\X}\in\R^{T\times m}$, and works directly  with $\widehat{\X}$. Furthermore, due to linearity, implementations can exploit existing autocorrelation software (which typically use the fast Fourier transform to do convolutions efficiently).
Specifically, if the columns of $\widehat{\X}$ are $\bv_1,\ldots,\bv_m$, then $\autocorr_\tau[\covmat]=\autocorr_\tau[\sum_{i=1}^m \bv_i\bv_i^\trans] = \sum_{i=1}^m \autocorr_\tau[\bv_i\bv_i^\trans]$
and $\autocorr_\tau[\bv_i\bv_i^\trans]$ is performed implicitly via an efficient autocorrelation implementation.

In the next section, we \del{will} use standard results from the sketching literature to create an estimator $\widehat{\covmat}$ and bound $\|\covmat - \widehat{\covmat}\|_F<\eps$, where $\|\cdot\|_F$ denotes the Frobenius (Hilbert-Schmidt) norm. To use those results, we first show that $\Autocorr$ is Lipschitz continuous so that a small $\eps$ implies an accurate autocorrelation (and hence an accurate power spectrum).

\begin{lemma}
\label{lemma:autocorr_thm}
Let $\covmat$ and $\widehat{\covmat}$ both be symmetric $T \times T$ matrices. Then 
\begin{align}
\new{\|\Autocorr[\covmat] - \Autocorr[\widehat{\covmat}]\|_2} \le    
    \|\Autocorr[\covmat] - \Autocorr[\widehat{\covmat}]\|_1 \,&\le \frac{\sqrt{ 1+\log T }}{N}\|\covmat - \widehat{\covmat}\|_F \label{eq:bound1norm}\\
    \|\Autocorr[\covmat] - \Autocorr[\widehat{\covmat}]\|_\infty &\le \frac{1}{N}\|\covmat - \widehat{\covmat}\|_F \label{eq:boundInfnorm}
\end{align}
where 
$\|\Autocorr[\covmat] - \Autocorr[\widehat{\covmat}]\|_1= \sum_{\tau=0}^{T-1} \big|\autocorr_\tau[\covmat]  - \autocorr_\tau[\widehat{\covmat}] \big|$, 
$\|\Autocorr[\covmat] - \Autocorr[\widehat{\covmat}]\|_\infty = \max_{\tau = 0, \ldots,T-1} \big|\autocorr_\tau[\covmat]  - \autocorr_\tau[\widehat{\covmat}] \big|$,
\new{ and
$\|\Autocorr[\covmat] - \Autocorr[\widehat{\covmat}]\|_2 = \sqrt{ \sum_{\tau = 0}^{T-1} \big|\autocorr_\tau[\covmat]  - \autocorr_\tau[\widehat{\covmat}] \big|^2}$.}
\end{lemma}

\begin{proof} 
\newcommand{\D}{\mathbf{\Delta}}

Define the difference between true covariance matrix and the estimate as $\D = \covmat - \widehat{\covmat}$.
For the $\infty$-norm case \new{in Eq.~\eqref{eq:boundInfnorm}}, using linearity of $\Autocorr$,
\begin{align*}
     \|\Autocorr[\D]\|_\infty &= 
      \max_\tau \|\autocorr_\tau[\D]\|  
      = \frac{1}{N}\max_{\tau}\bigg|\frac{1}{T-\tau}\sum_{t=1}^{T-\tau}\Delta_{t,t+\tau}\bigg| \le \frac{1}{N}\max_{t,t'} |\Delta_{t,t'}| \le \frac{1}{N}\|\D\|_F.
\end{align*}

From this, we immediately have the bound $\|\Autocorr[ \D ]\|_1 \le \frac{T}{N}\|\D\|_F$, but this is loose, and we show below how to derive a better dependence on $T$:
\begin{align}
\big\|&\Autocorr[\covmat] - \Autocorr[\widehat{\covmat}]\big\|_1
= \sum_{\tau=0}^{T-1} \big|\autocorr_\tau [\Delta] \big| 
\le \frac{1}{N}\sum_{\tau=0}^{T-1}\frac{1}{T-\tau}\sum_{t=1}^{T-\tau} |\Delta_{t,t+\tau}| \nonumber \\
&\stackrel{\circled{1}}{\le} \frac{1}{N}\sum_{\tau=0}^{T-1} \sqrt{ \frac{1}{T-\tau} \sum_{t=1}^{T-\tau} |\Delta_{t,t+\tau}|^2 } 
\stackrel{\circled{2}}{\le} \frac{1}{N} \sqrt{\sum_{\tau=0}^{T-1}\frac{1}{T-\tau}} \sqrt{\sum_{\tau=0}^{T-1} \sum_{t=1}^{T-\tau} |\Delta_{t,t+\tau}|^2 } \nonumber \\
&= \frac{1}{N}\sqrt{\sum_{\tau=1}^{T}\frac{1}{\tau}} \sqrt{ \|\Delta\|_F^2 - \sum_{\substack{\alpha\in \textrm{lower triang.}\\ \textrm{off-diag elems}}} \Delta_\alpha^2 }  
\le \frac{\sqrt{ 1+\log T }}{N} \|\Delta\|_F,
\label{ineqn::distr-free-bdd}  
\end{align}
where $\circled{1}$ is due to Jensen's inequality, 
and $\circled{2}$ is due to Cauchy-Schwarz. 

\new{The first inequality in Eq.~\eqref{eq:bound1norm} follows from a general property of the $\|\cdot\|_2$ and $\|\cdot\|_1$ norms}.

\end{proof}

\section{Theoretical Guarantees}
We give bounds on the error of autocorrelation evaluation due to sketching the rows of $\X$, i.e., $\widehat{\X}^\trans = \sketch \X^\trans$. Each row consists of the data at a given time $t$, so this can be trivially implemented in a streaming fashion.  The overall compression ratio is $\gamma=\frac{m}{N}$, \new{which for a fixed $m$ is} 
independent of $T$.

\begin{proposition}
\label{thm::autocorr_thm}
For any $\eps > 0$, and for a data matrix $\X\in\R^{T\times N}$, compute $\widehat{\X} = \X\sketch^\trans \in \R^{T\times m}$ for 
\new{a sketch $\sketch$ with enough rows $m$ such that} 
$\sketch \in \text{JLT}(\eps,\delta,2T)$, and define $\covmat = \X\X^\trans$ and $\widehat{\covmat}=\widehat{\X}\widehat{\X}^\trans$.
Then with probability at least $1-\delta$, the computed autocorrelation based solely on the data sketch satisfies the following error characterizations:
\begin{align}
    \label{eqn::autocorr_error_bd}
\new{ \frac{\|\Autocorr[\widehat{\covmat}] - \Autocorr[\covmat]\|_2}{\|\X\|_F^2} \le }    
    \frac{\|\Autocorr[\widehat{\covmat}] - \Autocorr[\covmat]\|_1}{\|\X\|_F^2} &\le \frac{\sqrt{ 1+\log T }}{N}\varepsilon\\
    \frac{\|\Autocorr[\widehat{\covmat}] - \Autocorr[\covmat]\|_\infty}{\|\X\|_F^2} &\le \frac{1}{N}\varepsilon.
\end{align}
In particular, if $\sketch$ is a Gaussian, Haar or FJLT sketch, then $\sketch \in \text{JLT}(\epsilon,\delta,2T)$ if $m$ is chosen as in Table~\ref{table::JLT}.
\end{proposition}

\begin{proof}
A standard sketching result due to Sarl\'{o}s~\cite{Sarlos_Bounds} gives the error bound for using JLT to estimate  matrix products as the following:
let $\X\in\R^{T_1\times N}$ and $\Y\in \R^{N\times T_2}$. If $\sketch$ is a JLT($\varepsilon,\delta,T_1+T_2$), then \[
\P(\|\X\Y - \X\mathbf{\Omega}^\trans\mathbf{\Omega}\Y\|_F \le \varepsilon\|\X\|_F\|\Y\|_F)\ge 1-\delta
\]
Applying Lemma~\ref{lemma:autocorr_thm} with $\Y=\X$ gives the result immediately.
\end{proof}

To quantitatively characterize how the error in autocorrelation evaluation depends on the compression ratio, we have the following corollary. 
\begin{corollary}
\label{prop::gamma}
Under the setting of Theorem~\ref{thm::autocorr_thm}, assuming the data matrix $\X$ has bounded entries, then the required compression ratio $\gamma = m/N$ to have $ \|\Autocorr[\widehat{\covmat}] - \Autocorr[\covmat]\|_1 \le \varepsilon$ with probability greater than $1-\delta$ is 
$ \gamma = \mathcal{O}\big( \frac{T^2 \log T \log (T/\delta)}{\varepsilon^2 N} \big)$ for Gaussian or Haar matrix sketches, and \remove{$\gamma =  \mathcal{O}\big( \frac{ T^2 \log T \log (2T/(\delta-\e^{-\log^4N}))}{\varepsilon^2 N} \big)$} \new{$\gamma = \mathcal{O}\big( \frac{T^2\log T \log(Nd/\delta)\log(d/\delta)}{\eps^2 N} \big)$} for FJLT sketches.
\end{corollary}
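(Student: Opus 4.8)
The plan is to feed the conclusion of Theorem~\ref{thm::autocorr_thm} into Table~\ref{table::JLT}, treating the JLT accuracy parameter as a free quantity to be tuned so that the resulting autocorrelation error meets the prescribed tolerance. To keep the two roles of the symbol $\varepsilon$ separate, I would write $\eps_0$ for the JLT parameter (so that $\sketch \in \text{JLT}(\eps_0,\delta,2T)$) and reserve $\varepsilon$ for the target accuracy in the bound $\|\Autocorr[\widehat{\covmat}] - \Autocorr[\covmat]\|_1 \le \varepsilon$. The theorem then gives, with probability at least $1-\delta$,
\[
\|\Autocorr[\widehat{\covmat}] - \Autocorr[\covmat]\|_1 \le \frac{\sqrt{1+\log T}}{N}\,\eps_0\,\|\X\|_F^2,
\]
so it suffices to choose $\eps_0$ small enough that the right-hand side is at most $\varepsilon$.

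First I would convert the bounded-entries hypothesis into a bound on $\|\X\|_F^2$. If every entry satisfies $|\signal(t,\spc_i)| \le C$ for a constant $C$, then $\|\X\|_F^2 = \sum_{t,i} |\signal(t,\spc_i)|^2 \le C^2 TN = \mathcal{O}(TN)$. Substituting this into the displayed bound, the requirement that the error be at most $\varepsilon$ becomes
\[
\frac{\sqrt{1+\log T}}{N}\,\eps_0\cdot \mathcal{O}(TN) \le \varepsilon,
\qquad\text{i.e.}\qquad
\eps_0 = \mathcal{O}\!\left(\frac{\varepsilon}{T\sqrt{\log T}}\right).
\]

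Next I would read off the compressed dimension from Table~\ref{table::JLT} with $d = 2T$. For the Gaussian and Haar rows, $m = \mathcal{O}(\eps_0^{-2}\log(2T/\delta))$; since $\eps_0^{-2} = \mathcal{O}(T^2\log T/\varepsilon^2)$, this gives $m = \mathcal{O}(T^2\log T\,\log(2T/\delta)/\varepsilon^2)$, and dividing by $N$ yields the claimed $\gamma$. The FJLT case is identical after substituting the FJLT row, using the identity $N^{-\log^3 N} = \e^{-\log^4 N}$ (natural logarithm) to rewrite the failure-probability offset.

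The computation is essentially bookkeeping, and the only points that require care, rather than any genuine obstacle, are keeping the JLT accuracy $\eps_0$ distinct from the target $\varepsilon$ and correctly propagating the $\|\X\|_F^2 = \mathcal{O}(TN)$ factor. Note that the $N$ in $\|\X\|_F^2$ cancels the explicit $1/N$ in the theorem's bound, so $\eps_0$ in fact depends only on $\varepsilon$ and $T$; the single factor $1/N$ in the final $\gamma$ comes entirely from the definition $\gamma = m/N$, while the $T^2$ arises from $\eps_0^{-2}$ with $\eps_0 = \Theta(\varepsilon/(T\sqrt{\log T}))$. I would double-check this dimensional accounting, since it is the step most likely to hide a stray factor of $N$ or $T$.
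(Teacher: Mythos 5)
Your argument is exactly the paper's (the paper simply asserts the corollary ``follows immediately using the theorem and Table~\ref{table::JLT}''), and you correctly fill in the two details that make it work: the bounded-entries hypothesis gives $\|\X\|_F^2=\mathcal{O}(TN)$, which cancels the $1/N$ in the theorem and forces the JLT parameter to scale as $\eps_0=\Theta(\varepsilon/(T\sqrt{\log T}))$. One small remark: substituting the FJLT row of the table literally would carry along an extra $\log^4(N)$ factor that the corollary's stated rate omits, but that discrepancy lies in the paper's statement, not in your derivation.
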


\begin{newstuff}
\begin{proof}
For Gaussian or Haar matrix sketches as a JLT($\widetilde{\eps},\delta,2T$), recall from Table \ref{table::JLT} that the required compressed dimension $m = \mathcal{O}(\widetilde{\eps}^{-2}\log(T/\delta))$. 
Then with probability greater than $1-\delta$, $ \|\Autocorr[\widehat{\covmat}] - \Autocorr[\covmat]\|_1  \le \frac{\sqrt{1+\log T}}{N}\widetilde{\eps}\|\X\|_F^2$
using the 
error characterization equation \eqref{eqn::autocorr_error_bd} in Theorem~\ref{thm::autocorr_thm}. Then, to ensure this $\ell_1$ norm loss bound is less than some $\varepsilon$, the required compression ratio is $\gamma = m/N = \mathcal{O}(\widetilde{\eps}^{-2}\log(T/\delta))/N = \mathcal{O}\big( \frac{T^2 \log T \log (T/\delta)}{\varepsilon^2 N} \big)$, where the last equality exploits $\|\X\|_F^2 = \mathcal{O}(TN)$ since $\X$ has bounded entries.
Similar arguments will give the order of the compression ratio $\gamma$ for FJLT sketches.
\end{proof}
\end{newstuff}

The corollary suggests that as the simulation time $T \rightarrow \infty$, our compression ratio grows, until at some point it is not useful. However, $T$ should be seen as inversely proportional to the lowest desired frequency in the power spectrum, not total simulation time. For longer simulation times $\longT$, the data should be blocked into $B$ matrices $\X_{(1)}, \ldots,\X_{(B)}$, each of size $T = \longT/B$, and then form $\covmat = \frac{1}{B}\sum_{b=1}^B \X_{(b)}\X_{(b)}^\trans$, and similarly for $\widehat{\covmat}$, with fresh sketches $\sketch_{(b)}$ drawn for each block.  If for some reason one needed arbitrarily low frequencies, and wanted the sample time autocorrelation to converge to the true time autocorrelation, then choose $B\propto \sqrt{\longT}$~\cite{timeSeries_Brockwell,Proakis2006}, but otherwise choose $B \propto \longT$ and hence the block size $T$ is constant.

Thus given a fixed time $T$, the corollary says that $\gamma \approx \mathcal{O}(1/N)$ and hence as the amount of data increases, the compression savings are great; in fact, the absolute number of measurements $m$ is independent of the spatial size $N$ \new{for Gaussian and Haar sketches, and only logarithmically dependent on $N$ for the FJLT sketch}. For example, this means that if one increases the resolution of a grid or mesh, the amount of data needed to be stored \new{using a Gaussian sketch} actually stays constant. This holds not just for 1D grids, but 3D or any dimension grids.

We also note that the matrix $\covmat$ need not represent all grid points or particles, but could instead represent a subset of grid points or particles, and then the calculations are done independently for each $\covmat$ and averaged in the end. This may be beneficial in parallel and distributed computing, where each $\covmat$ might represent just the spatial locations stored in local memory.

\begin{newstuff}
\begin{remark}[Error for the power spectral density]
Any bound on $ \|\Autocorr[\widehat{\covmat}] - \Autocorr[\covmat]\|_2 $
immediately translates to a bound on the error of the discrete power spectral density in the Euclidean norm, since the discrete power spectral density is the discrete Fourier transform (DFT) of autocorrelation, and the DFT operator is unitary.
\end{remark}
\end{newstuff}

\section{Numerical Experiments}

The pseudo-code for the proposed sketching algorithm is in Algo.~\ref{alg:sketching}. It exploits existing fast implementations of sample autocorrelation, e.g., \texttt{xcorr} in Matlab or \texttt{numpy.correlate} in Python. 
We use Matlab indexing notation, with $\X(:,j)$ meaning the $j^\text{th}$ column of $\X$, and $\X(i,:)$ the $i^\text{th}$ row.  For our data, the mean was near zero and was not subtracted explicitly. Bartlett windowing~\cite{Proakis2006} was performed to reduce spectral leakage whenever $B>1$.

\begin{algorithm}[ht]
\caption{Sketching for autocorrelation and power density estimation.
Requires existing implementation of \texttt{autocorr}.
}
\label{alg:sketching}
\begin{algorithmic}[1]
\Require Simulation time $\longT$, number of blocks $B$, compression size $m$
\State $T = \longT/B$
\For{$b=0,1,2,\ldots,B-1$}
\State Draw $\sketch \in \R^{m \times N}$ \Comment{One of the \remove{sketches} \new{sketching operators} from \S\ref{sec:sketching}} 
\State Initialize empty array $\widehat{\X}\in \R^{T \times m}$
\For{$t=1,2,\ldots,T$}
    \State Generate data $\bx^\trans \in \R^{1\times N}$
    according to simulation (at time $t+bB$); equivalent to 
    {\it row} $\X(t,:)$ 
    \State Compute and store row $\widehat{\X}(t,:) = (\sketch\bx)^\trans$
    \State Discard $\bx$ from memory
\EndFor
\State Compute $\Autocorr_{(b)} = \frac{1}{N}\sum_{i=1}^m \texttt{autocorr}(\widehat{\X}(:,i))$
\EndFor
\State $\Autocorr = \frac{1}{B}\sum_{b=0}^{B-1} \Autocorr_{(b)}$ \Comment{autocorrelation}
\State $S=\texttt{FFT}(\Autocorr)$ \Comment{power spectral density}
\end{algorithmic}
\end{algorithm}

\begin{remark} \label{rmk:implicit}
Conceptually, the algorithm forms $\widehat{\X} = \X\sketch$, though 
the full-size data matrix $\X$ is never actually formed, as $\widehat{\X}$ is built up row-by-row (and old rows of $\X$ are discarded). Similarly, the estimated covariance matrix $\widehat{\covmat}$, which is introduced for discussion on theoretical properties of sketching methods, is never explicitly constructed for computation, as discussed in Section~\ref{sec:autocorr}.
\end{remark}

\subsection{Baseline methods}
Many existing algorithms for computing autocorrelation require complete data, such as the utility routines provided with the popular MD simulator \texttt{LAMMPS}~\cite{LAMMPS}, so we do not compare with these since they work with the full data. 
Among subsampling approaches, we compare with the following three types of subsampling (recall the data matrix is structured as $\X\in\R^{T\times N}$, where $T$ is the total length of time and $N$ is the total number of particles or grid size), all of which sample with replacement:
\newcommand{\zi}{z_{\tau}^{\mathcal{I}}}
\begin{description}
    \item[Time dimension compression] Given a compression ratio $\gamma$,  sample time points $\mathcal{I}\subset\{1,\ldots,T\}$ with size $|\mathcal{I}|=\lceil\gamma T\rceil$  (where $\lceil a \rceil$ rounds $a$ up to the nearest integer) by  selecting \textbf{rows} from the data matrix $\X$. The natural unbiased estimator for the  autocorrelation  $\autocorr_\tau[{\X}]$  is 
    \begin{equation}\label{eq:timeSubsample}
    \frac{1}{N}\frac{1}{\zi}
    \sum_{t \mid t,t+\tau\in\mathcal{I}} \sum_{i=1}^N \X(t,i)\X(t+\tau,i)
    \end{equation}
    where $\zi$ is a normalization coefficient that is the number of $t$ such that $t\in\mathcal{I}$ and $t+\tau\in\mathcal{I}$ 
    (for full sampling, this is $\zi=T-\tau$ as in \eqref{eqn::autocorr_defn}).  Efficient computation of this autocorrelation estimate is discussed in Remark~\ref{rmk:normalization}.
    When the index $\mathcal{I}$ is sufficiently small, not all lags $\tau$ will have an estimate, thus making computation of the PSD unclear. In these cases, we interpolate the missing lag values using cubic splines.
    
    There are several common choices for $\mathcal{I}$:
    \begin{enumerate}
        \item Choosing $\mathcal{I}$ (pseudo-)randomly according to the uniform distribution. This is the method we use in the experiments unless otherwise noted, as it has the best performance among these types of methods.
        \item Choosing $\mathcal{I}$ via a power-series sampling scheme that is common in simulation of polar liquids (where $\autocorr_\tau[{\X}]$ is only needed for short lags $\tau$ due to the rapid decorrelation).  
        Given a block length $k$, let $\mathcal{I}_0 = \{ 1, 2, 4, 8, \ldots, 2^k\}$, and then the index set $\mathcal{I}$ is divided into blocks
        $\mathcal{I} = \mathcal{I}_0 \cup \left( 2^k + \mathcal{I}_0 \right) \cup \left( 2^{k+1} + \mathcal{I}_0 \right) \cup \ldots $.
        This scheme is intended to give dense sampling for low lags, and some sampling for higher lags while still allowing for reasonable book-keeping due to its structured nature.  See Fig.~\ref{fig:powerSeries} for a comparison of this scheme with random sampling; it generally underperforms random sampling, so we do not present further comparisons.
        
        \item Sparse ruler sampling.
        As shown in Fig.~\ref{fig:powerSeries}, the power-series scheme does not generate all possible lags.  Sampling schemes that do generate all possible lags (up to some point) are known as {\it rulers}, and rulers with only a few samples are {\it sparse rulers}, and are used in signal processing~\cite{SparseRuler}. One can modify the power-series scheme so that each block $\mathcal{I}_0$ is a sparse ruler (we used Wichmann Rulers). The scheme still underperforms random sampling; see 
        \ref{sec:appendix-baseline}
        for more details.
        
        \item Sampling blocks (Algorithm 8 in \cite{FRENKEL_book}), which gives good estimates of $\autocorr_\tau[{\X}]$ for small $\tau$, but does not attempt to estimate $\autocorr_\tau[{\X}]$ for $\tau$ larger than the block size. This does not perform well and details in left for the supplementary information section 1.A.
        
        \item Hierarchical sampling schemes (Algorithm 9 in \cite{FRENKEL_book}), designed to improve on block sampling by giving a small amount of large lag information. This method is exact for some derived quantities (like diffusion coefficients) but {\it ad-hoc} for estimating the large-lag autocorrelation.  This method has high errors (see \ref{sec:appendix-baseline} for details).
    \end{enumerate}
    These last two methods (4 and 5) are different than all the other baseline methods we discuss as they require ``on-the-fly'' computation to record the estimate of $\autocorr_\tau[{\X}]$ for a subset of the lags $\tau$, and this estimate is then updated. These methods do not simply sample $\X$ and then postprocess. Both method 4 and 5 do not give accurate estimates for large lags, hence we do not present further simulation results with these methods.
    
    \item[Particle dimension compression] Given a compression ratio $\gamma$, randomly sample  particles (or grid points) to form $\mathcal{I}\subset \{1,\ldots,N\}$ with size $|\mathcal{I}|=\lceil\gamma N\rceil$
    by uniformly selecting \textbf{columns} from the data matrix $\X$. The natural unbiased estimator of $\autocorr_\tau[{\X}]$ is then
    $$
    \frac{1}{|\mathcal{I}|}\frac{1}{T-\tau}
    \sum_{t=1}^{T-\tau} \sum_{i\in\mathcal{I}} \X(t,i)\X(t+\tau,i).
    $$
    
    \item[Na\"ive uniform sparsification (both time and particles)] Given a compression ratio $\gamma$, uniformly sample $\lceil\gamma TN\rceil$ \textbf{entries} from $\X$.
    This approach has the same estimator for autocorrelation of lag $\tau$ as the case time dimension compression, except that the sampling set $\mathcal{I}$ and normalization constant now depend on the column $i$. We refer to this as ``na\"ive'' since it uses a uniform distribution, in contrast to complicated weighted sampling schemes like \cite{AKL2013Sampling} used in the sampling literature.  With an appropriate normalization $z_{\tau,i}^\mathcal{I}$, the unbiased estimate of  $\autocorr_\tau[{\X}]$ 
    is
    $$
       \frac{1}{z_{\tau,i}^\mathcal{I}}
    \sum_{i=1}^{N} 
    \sum_{\substack{t,\ \text{such that} \\ (t,i),(t+\tau,i)\in\mathcal{I}}}
    \X(t,i)\X(t+\tau,i).
    $$
    which can be calculated via the above formula or via Remark~\ref{rmk:normalization}.
\end{description}

One can combine time dimension and particle dimension compression (doing time-then-particle, or particle-then-time), but for a given overall compression level, we did not find that this improved accuracy, and therefore do not include it in the results.

\begin{remark} \label{rmk:normalization}
To efficiently compute the estimate of the autocorrelation for any time dimension compression scheme, i.e., Eq.~\ref{eq:timeSubsample}, one can use existing fast autocorrelation functions. Specifically, set the non-sampled entries to zero, so they do not contribute to the sum, and put each column of $\X$ through a standard autocorrelation function and then average the results. 
To find the normalization factor $\zi$, one can create an indicator vector $\boldsymbol{\xi}$ where $\xi_t=1$ if $t\in\mathcal{I}$ and $\xi_t=0$ if $t\not\in\mathcal{I}$ (think of this as a ``book-keeping'' particle that can be stored as an extra particle or grid-point), and then compute the autocorrelation of $\boldsymbol{\xi}$ to get the normalization $\zi$.  Computing the value by hand is possible but tedious and the programming is error-prone, which may be a reason why simple (non-random) time compression schemes have historically been favored.
\end{remark}

\begin{figure}
    \centering
    \includegraphics[width=.55\textwidth]{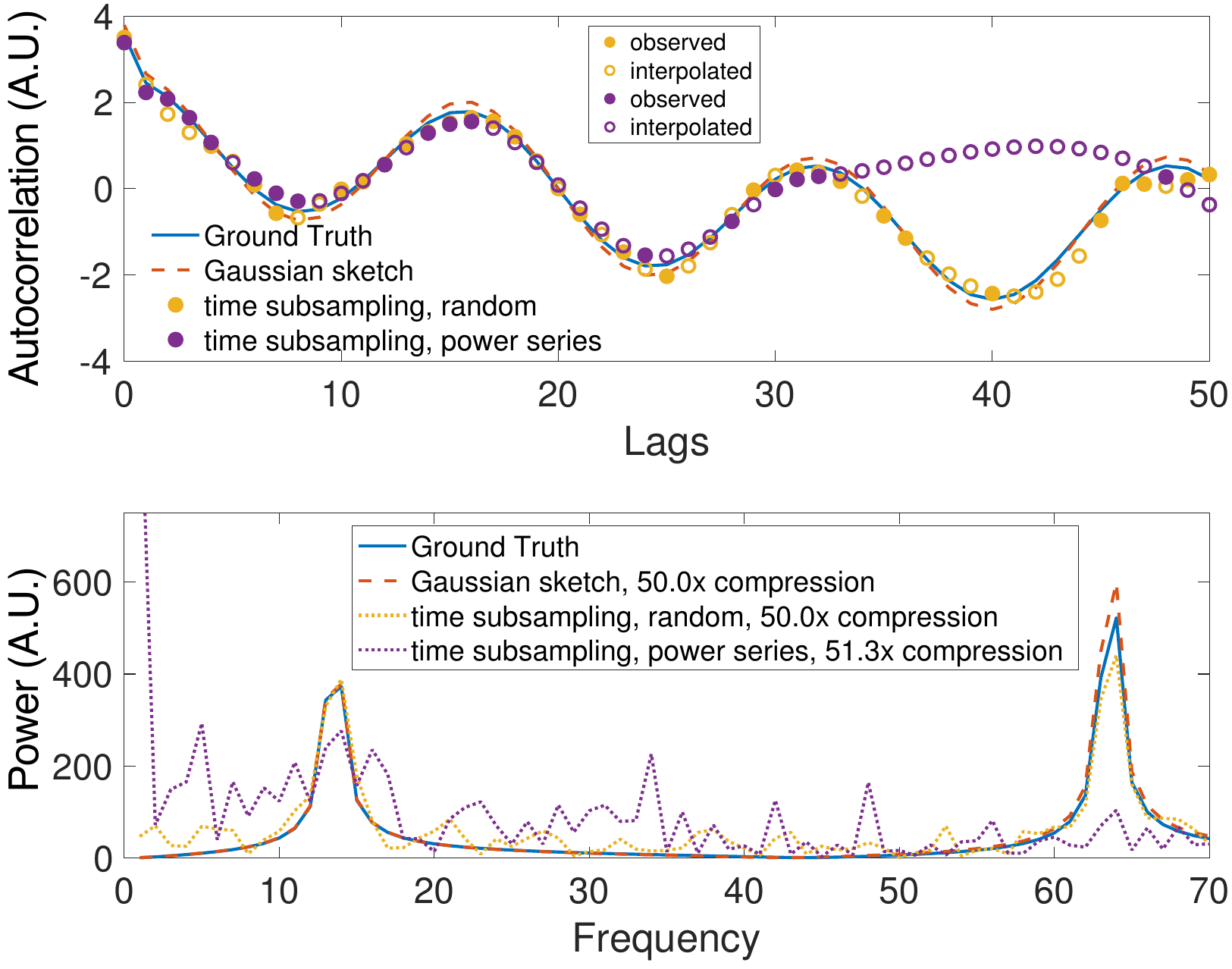}
    \caption{Autocorrelation (top) and power spectral density (bottom) for the two frequency simulation.}
    \label{fig:powerSeries}
\end{figure}

To illustrate the different types of time dimension compression schemes, we conduct a basic experiment of $N=10^4$ particles and $T=2000$ time points with unit spacing, where each particle is randomly assigned one of two possible frequencies (one fast, one slow), and with a random phase; the autocorrelation is the fast sinusoid modulated by the slow sinusoid. The power spectral density ranges up to 500 Hz, of which the first 70 Hz are shown in the bottom of Fig.~\ref{fig:powerSeries}. The ground truth would show two delta functions if $T=\infty$ but are spectrally broadened by the finite time sample.  Fig.~\ref{fig:powerSeries} shows that, at $50\times$ compression, the time sampling approaches have no observations for some lags and must be interpolated. The random time subsampling is more accurate than the power series approach. The Gaussian sketching method \new{requires no interpolation and the PSD it computes} is significantly more accurate. \del{ than both time compression methods.}

\subsection{Methanol ensemble simulation data}
Our dataset is a MD simulation using the \texttt{LAMMPS} software~\cite{LAMMPS}  for $N=384$ methanol molecules with time step 1 fs for 10 ps, with potentials between pairs of bonded atoms, between triplets and between quadruplets of atoms set as harmonic, and potential for pairwise interactions set as the hybrid of the ``DREIDING'' hydrogen bonding Lennard-Jones potential and the Lennard-Jones with cut-off Coulombic potential~\cite{dreiding}. The quantity of interest is the power spectral density of the velocity of the molecules.  Except in Fig.~\ref{fig::TvsError_MD}, no blocking was performed, so $B=1$ and $T=\longT=10000$. The true sample autocorrelation, up to $\tau=100$, is shown in Figure~\ref{fig::autocorr_eg}. The actual simulation was run for $20000$ time steps (20 ps) but the first 10 ps are ignored as the simulation was equilibrating.

Figure~\ref{fig::psd_eg} shows the corresponding true power spectral density (PSD), as well as the PSD computed via the three proposed sketching methods (with Gaussian, Haar and FJLT sketches), as well as the three benchmark methods, using only about $1\%$ of the data. The three sketching methods faithfully recover the true peaks of the spectrum, while the baseline methods (in blue) either have spurious peaks (time compression and naive uniform compression) or miss/distort peaks (particle compression).

\begin{figure}[t]
\centering
\includegraphics[width=.6\columnwidth]{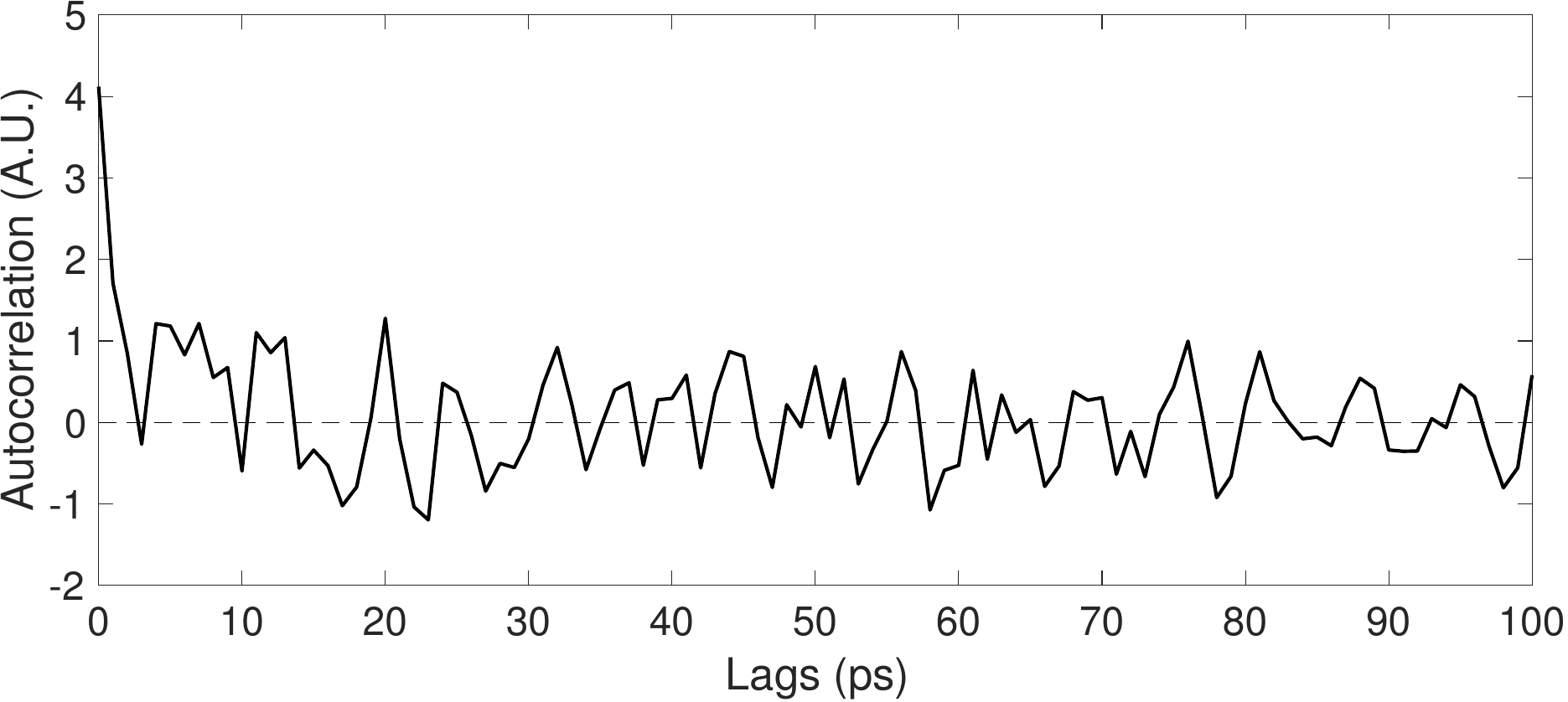}
\caption{Ground truth of autocorrelation of the velocity of methanol molecules up to $\tau=100$.}
\label{fig::autocorr_eg}
\end{figure}

\begin{figure}[t] 
\centering
\includegraphics[width=\columnwidth]{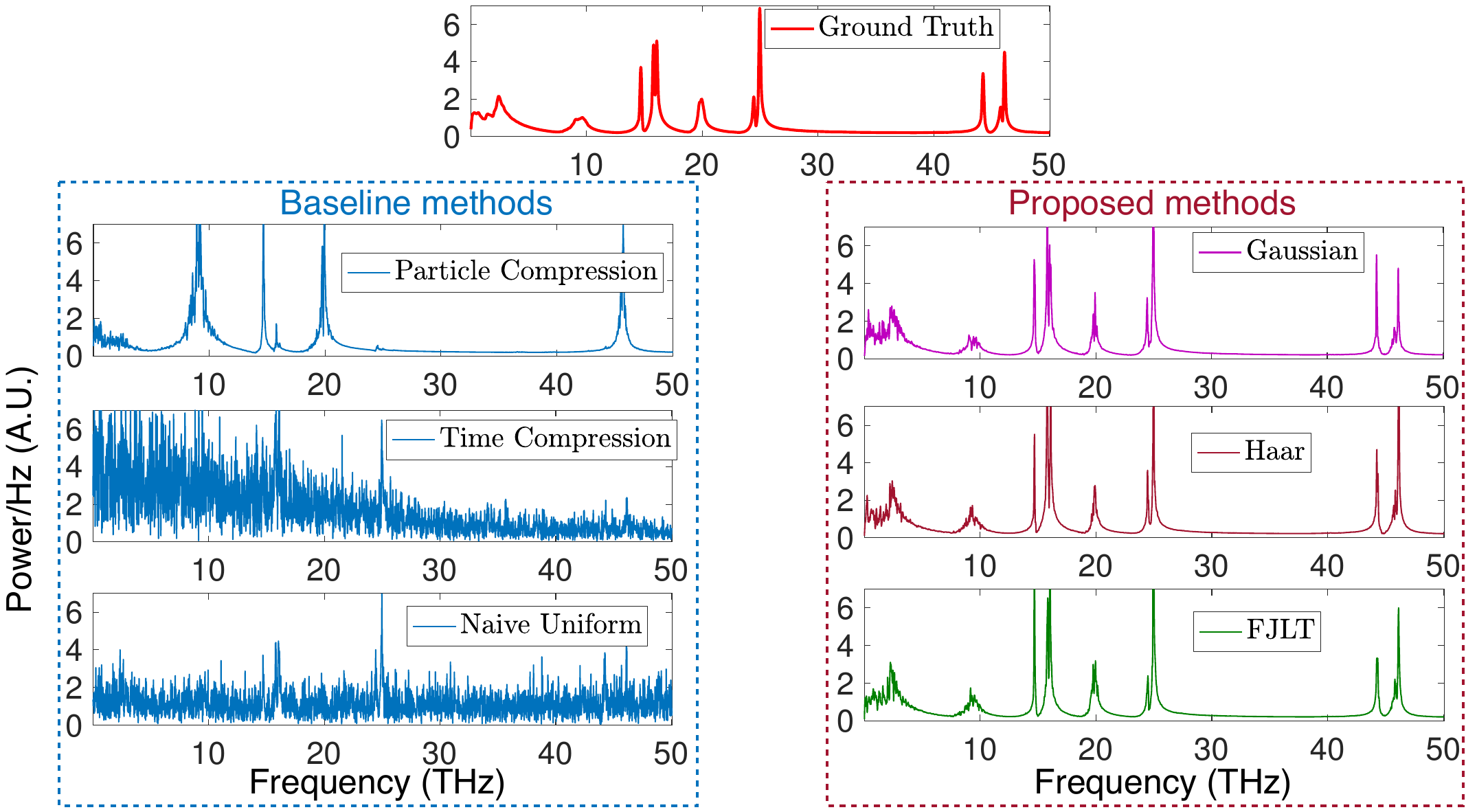}
\caption{Power spectral density for methanol data. 
The compression ratio is 1\% for each method.}
\label{fig::psd_eg}
\end{figure}

For systematic and quantitative comparison, we consider three metrics for evaluating the estimated PSD  $\hat{\mathbf{s}}=\widehat{S}(\omega)$ compared to the true PSD $\mathbf{s}={S}(\omega)$. 
First, we use the relative $\ell_2$ norm
$\|\hat{\mathbf{s}} - \mathbf{s}\|_2/\|\mathbf{s}\|_2 $
which also captures the relative $\ell_2$ error for the autocorrelation (since the Fourier transform is unitary, i.e., Parseval's identity). Second, we use the relative $\ell_\infty$ error, which is defined as 
 $\max_{i, s_i\ne 0} \frac{ | \hat{s}_i - s_i |}{|s_i|}$. 
 Third, we use a relative $\ell_1$ error, defined as $\|\hat{\mathbf{s}} - \mathbf{s}\|_1/\|\mathbf{s}\|_1$, where $\|\mathbf{s}\|_1 = \sum_{i} |s_i|$.

When computing the compression ratio, a sketching method with $\sketch \in \R^{m \times N}$ achieves a $\gamma=m/N$ compression ratio, as no meta-data needs to be stored. The time dimension and particle dimension subsampling methods must also save the time or particle/space indices $\mathcal{I}$ as meta-data, though this is typically insignificant, so they achieve approximately $|\mathcal{I}|/T$ and $|\mathcal{I}|/N$ compression ratios, respectively. The na\"ive uniform sparsification, which samples in both space and time, must save both time and particle/space indices; this is done implicitly by storing the data as a sparse matrix in compressed sparse column format. The overhead of storing these indices can be significant, which is why the compression ratio for ``na\"ive uniform'' is slightly worse than the target of $|\mathcal{I}|/(TN)$.

\begin{figure}[t]
\centering
\includegraphics[width=\columnwidth]{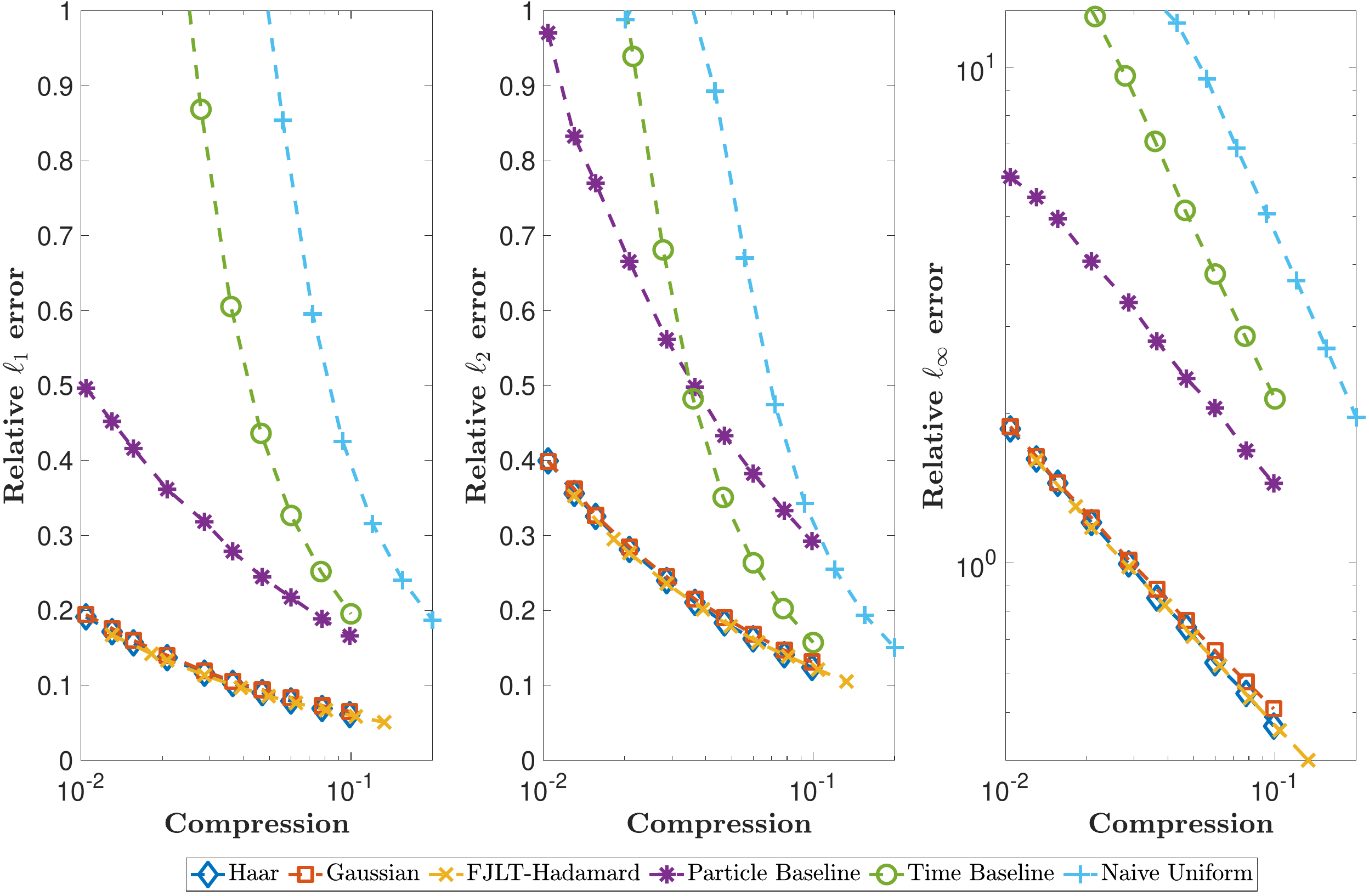}
\caption{
The error due to approximating the PSD for the proposed methods (Haar, Gaussian, and FJLT-Hadamard) compared to baselines, on the methanol data. Left: relative $\ell_1$ error. Middle: relative $\ell_2$ error. Right: relative $\ell_\infty$ error.}
\label{fig::err_multi_MD}
\end{figure}

Figure \ref{fig::err_multi_MD} shows the error metrics as a function of compression ratio $\gamma$ in the interesting regime where $\gamma \ll 1$.
We see that sketching methods perform better than baseline methods in 
the $\ell_1$, $\ell_2$ and $\ell_\infty$ metrics, 
and the advantage is most significant when the compression ratio is small.

\begin{figure}[t]
\centering
\includegraphics[width=\columnwidth]{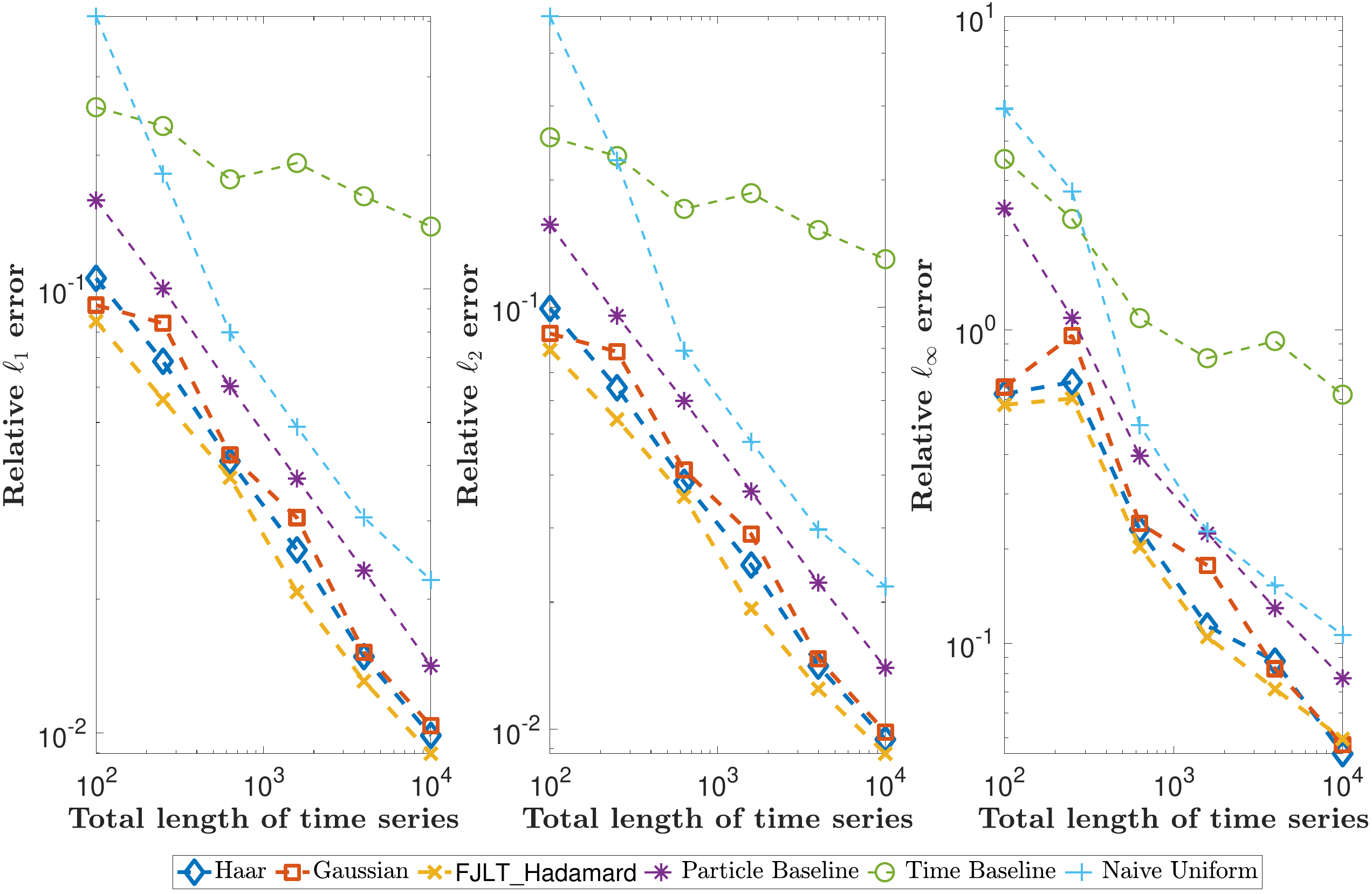}
\caption{Three metrics characterizing the discrepancy between estimated autocorrelation of first 15 lags and the ground truth vs.\ total length of time signals. 
The full time signal is divided into $B=\sqrt{\longT}$ blocks, each of which is used to evaluate the first 15 lags of autocorrelation. }
\label{fig::TvsError_MD}
\end{figure}

Figure \ref{fig::TvsError_MD} shows that the $\ell_1$, $\ell_2$ and $\ell_\infty$ errors 
decay to zero as the time series becomes arbitrarily long. Specifically, we take the total simulation time $\longT \rightarrow \infty$, and set $B=T=\sqrt{\longT}$ (this is necessary, since the simpler choice of $B=1$ and $T=\longT$ does not give a consistent estimator even with fully sampled data). The evaluation of the errors of the autocorrelation are with respect to the first 15 lags. The compression ratio of all sketching methods is fixed as 10\%.  The figure shows that all methods appear to be consistent, with the sketching methods significantly more accurate  
compared to the {\it ad hoc} baselines.

\paragraph{Synthetic data}
The performance of the sketching methods over the classical benchmark methods is significant, but in fact the discrepancy can be arbitrarily large. 
\ref{sec:appendix-synthetic}  shows a synthetic data set created to be adversarial for the classical methods, for which they perform poorly, whereas the sketching methods do well.  The data is created to have a few ``special'' particles which contribute significantly but are unlikely to be sampled by the particle sampling methods, and to have a few short pulses, so that the relevant time dynamics is likely to be missed by the time sampling methods. 
The sketching methods are not susceptible to such adversarial examples.

\section{Conclusions} Since second order statistics like autocorrelation and power density spectral can be computed via the empirical covariance matrix, this means that sketching methods can be used to preserve statistical properties of the data. These sketching methods come with well-understood theory, little extra computational burden, straightforward implementation, and excellent practical performance. For these reasons, we hope they find their place in the numerical simulation toolkit.  An interesting future question is whether even more powerful practical estimators of autocorrelation can be achieved by bypassing the estimation of the covariance matrix.

\section*{Acknowledgments}
The authors thank Michael Wakin for helpful discussions on fast computation of autocorrelation, Marc Thomson for providing the molecular dynamics data,  Francis Starr for discussions of sampling schemes for water,
\new{and anonymous reviewers for helpful comments (notably, pointing out a better sample complexity bound for the FJLT)}. This material is based upon work supported by the National Science Foundation under grant no.\ 1819251.

\appendix

 % supplement

\section{Further experiments} \label{sec:appendix}

\subsection{Alternative baseline methods} \label{sec:appendix-baseline}
We expand on other alternatives for time-dimension compression (beyond the (1) random and (2) power-series sampling), namely
\begin{enumerate}\addtocounter{enumi}{2}
        \item Sparse ruler sampling.
        The power-series scheme does not generate all possible lags.  Sampling schemes that do generate all possible lags (up to some point) are known as {\it rulers}, and rulers with only a few samples are {\it sparse rulers}. One can modify the power-series scheme so that each block $\mathcal{I}_0$ is a sparse ruler (we used Wichmann Rulers). 
        
        \item Sampling blocks (Algorithm 8 in \cite{FRENKEL_book}), which gives good estimates of $\autocorr_\tau[{\X}]$ for small $\tau$, but does not attempt to estimate $\autocorr_\tau[{\X}]$ for $\tau$ larger than the block size. 
        
        \item Hierarchical sampling schemes (Algorithm 9 in \cite{FRENKEL_book}), designed to improve on block sampling by giving a small amount of large lag information. This method is exact for some derived quantities (like diffusion coefficients) but {\it ad-hoc} for estimating the large-lag autocorrelation.  This method has high errors. 
\end{enumerate}

\begin{figure}
    \centering
    \includegraphics[width=.65\textwidth]{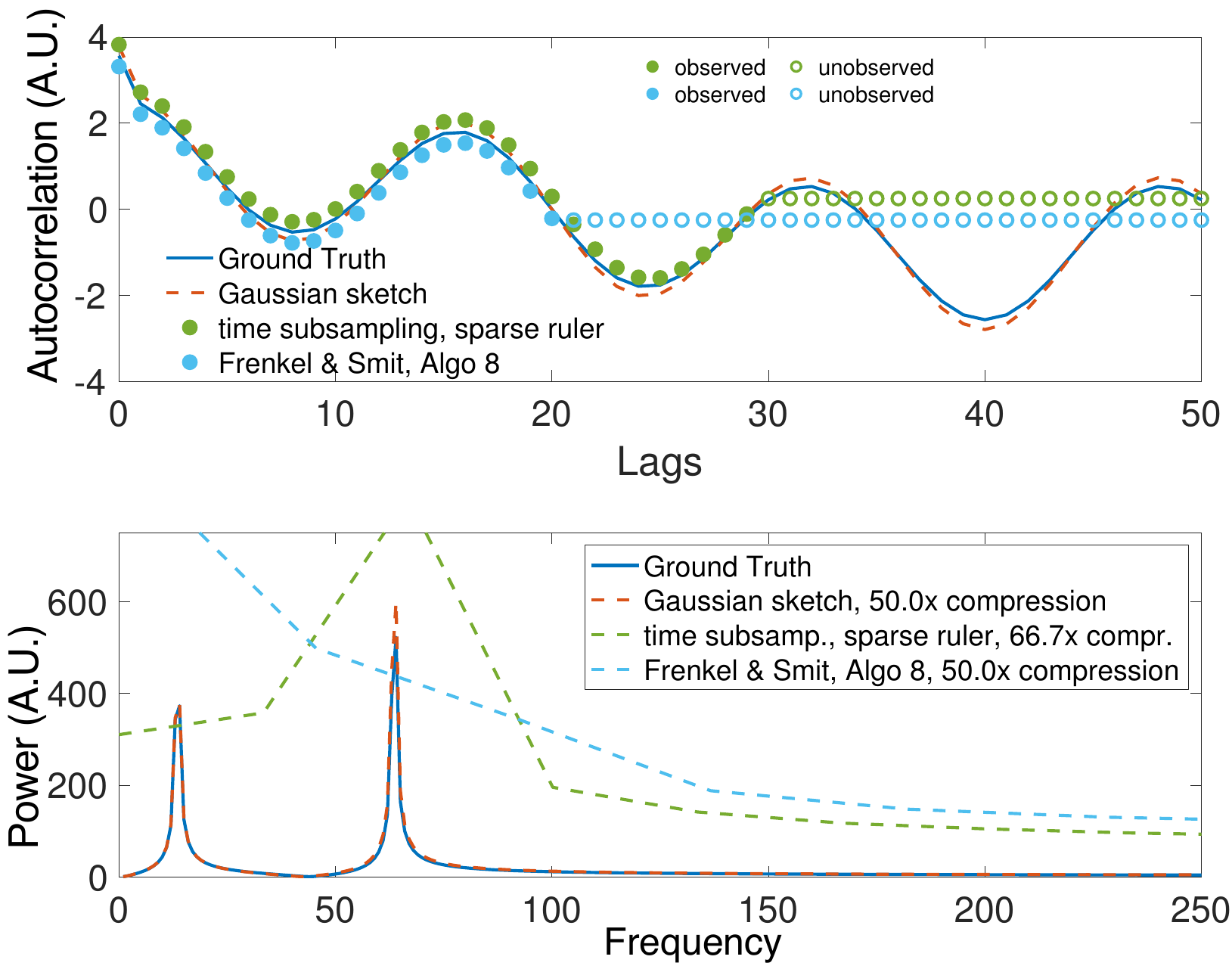} 
    \caption{Top: autocorrelation, and bottom: Power spectral density (PSD) for a synthetic simulation.  The sparse ruler subsampling and the block (Algorithm 8) subsampling miss sampling the autocorrelation at long lags, with the effect of making the PSD estimate have low resolution.  Y-axis in arbitrary units for both plots.
    }
    \label{fig:algo8}
\end{figure}

Fig.~\ref{fig:algo8} compares the sparse ruler sampling and block sampling (Algorithm 8), as well as using the Gaussian sketch. This uses the same $N=10000$ and $T=2000$ synthetic data as in Figure 1 in the main text.  Both the sparse ruler sampling and block sampling only observe the autocorrelation for short lags. For this reason, the autocorrelation cannot even be interpolated at missing lags, but rather these values must be extrapolated. Rather than do this, the PSD is computed using only the short time lags, but this has the effect of lowering the resolution of the PSD.  The bottom part of the figure shows the PSD.

Fig.~\ref{fig:algo9} demonstrates the hierarchical sampling scheme on the same data. This scheme samples in blocks (giving a good estimate of short-time autocorrelation lags, much like the block sampling scheme), but then also aggregates blocks to estimate longer lag autocorrelation.  For some quantities, such as the diffusion constant when defined as the integral of autocorrelation (e.g., in the discrete case, this is just a sum), this aggregation-by-averaging results in no loss. However, for estimating the autocorrelation itself, the estimate is highly inaccurate.  The corresponding PSD is not shown as it is considerably inaccurate.

\begin{figure}
    \centering
    \includegraphics[width=.65\textwidth]{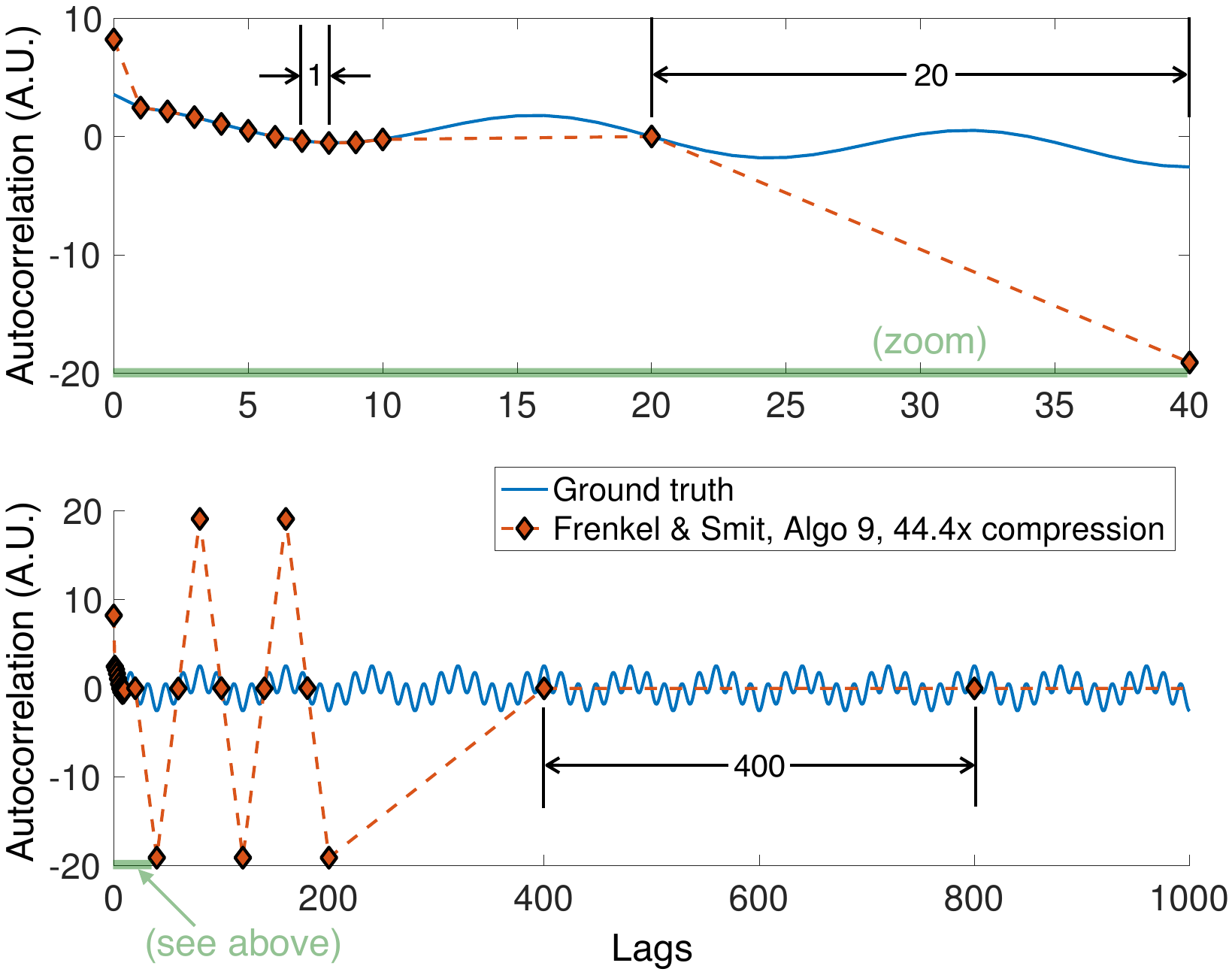} 
    \caption{Autocorrelation, demonstrating the hierarchical sampling scheme of Algorithm 9.  The top plot is a zoomed in version of the bottom plot.  The estimate of the autocorrelation at long lags is inaccurate, and the resulting PSD is unusable. } 
    \label{fig:algo9}
\end{figure}

\subsection{Synthetic data} \label{sec:appendix-synthetic}
The main paper presents realistic data and shows that newly proposed sketching methods outperform classical methods.  Here, we show that the difference in performance can be made almost arbitrarily large by choosing adversarial synthetic data.  The specific random nature of the sketching methods makes it impossible to create generic adversarial examples, whereas the classical methods which rely on weaker notions of randomness are much more susceptible.

\paragraph{Creation of the data set}
Consider a collection of $N=10,\!000$ particles among which 9997 of them share the same eigenfrequency $\omega$ while 3 particles have an additional eigenfrequency $\omega'$. The existence of special particles contributes to the inhomogeneity of the ensemble dynamics. Furthermore, there are 2 pulses in the time range for every particle in the ensemble. Each pulse can be represented by 
$p_1(t)=p(t-t_1)$, $p_2(t) = p(t-t_2)$ and 
$p(t) = 10\sin\left(\frac{\pi}{\delta}t\right)\mathbbm{1}(- \frac{\delta}{2} \le t \le \frac{\delta}{2})$, where 
 $\delta\approx 0.6 \cdot \frac{2\pi}{\omega}$ which accounts for more than half of a period of the signal with common eigenfrequency, and $\mathbbm{1}$ is the $0$-$1$ indicator function. Each particle has a random phase $\varphi_i\in[0,2\pi)$.  
Specifically, 9997 particles have the ``common'' dynamics
\[
(i = 1, \ldots, 9997 )\quad x^{\textrm{common}}_i(t) = \sin(\omega t + \varphi_i) + p_1(t) + p_2(t) + \varepsilon_{i}(t)
\]
while 3 ``special'' particles have one more ingredient in their dynamics 
\[
(j = 9998, 9999, 10000 )\quad  x^{\textrm{special}}_j(t) = \sin(\omega t + \varphi_j) + 80\sin(\omega' t + \varphi'_j) +  p_1(t) + p_2(t) + \varepsilon_{j}(t)
\]
so that when taking the expectation the additional frequency component demonstrate significant importance in the overall spectrum, and $\varepsilon(t)$ is white noise. 
Figure \ref{fig::syn_eg} shows the signal example of a common particle and a special particle, while the ground truth autocorrelation and power spectral density are shown in Figure \ref{fig::syn_autocorr_psd}. 
\begin{figure}[ht]
\centering
\includegraphics[width=.65\textwidth]{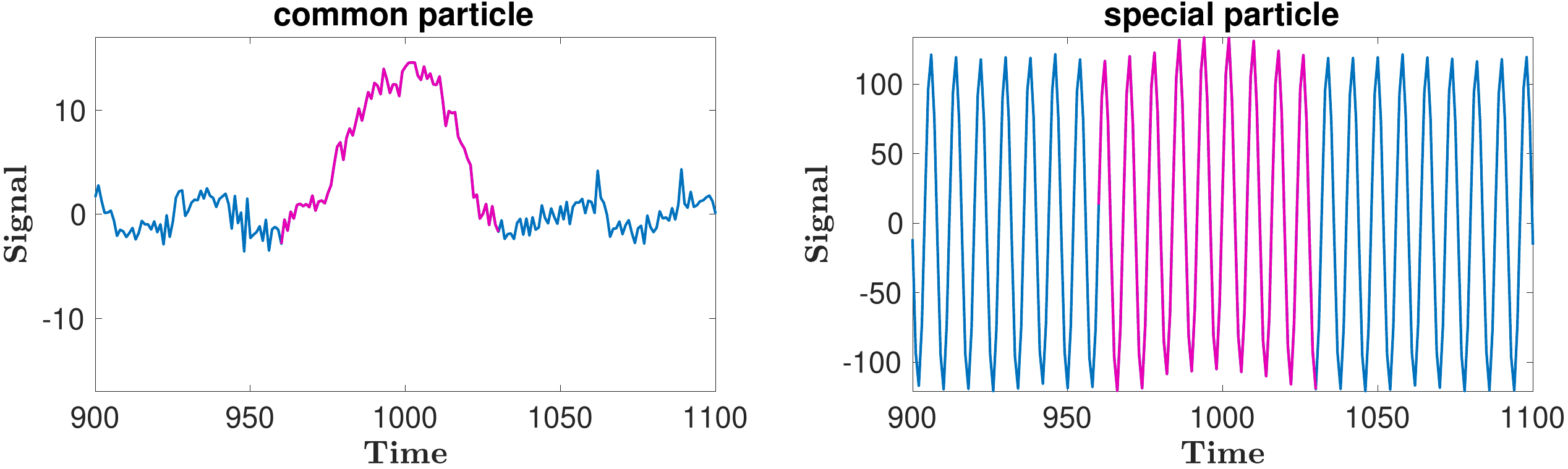}
\caption{Example of particle dynamics in synthetic data. The left subfigures shows the signal of a common particle and the right subfigure shows the signal of a particle with two eigen-frequencies. 2 pulses exist in the synthetic signal and are introduced apart from each other thus not merging their peaks, while we show the zoomed version of one pulse, which is marked in the colour of magenta.}
\label{fig::syn_eg}
\end{figure}

\begin{figure}[ht]
\centering
\includegraphics[width=.65\textwidth]{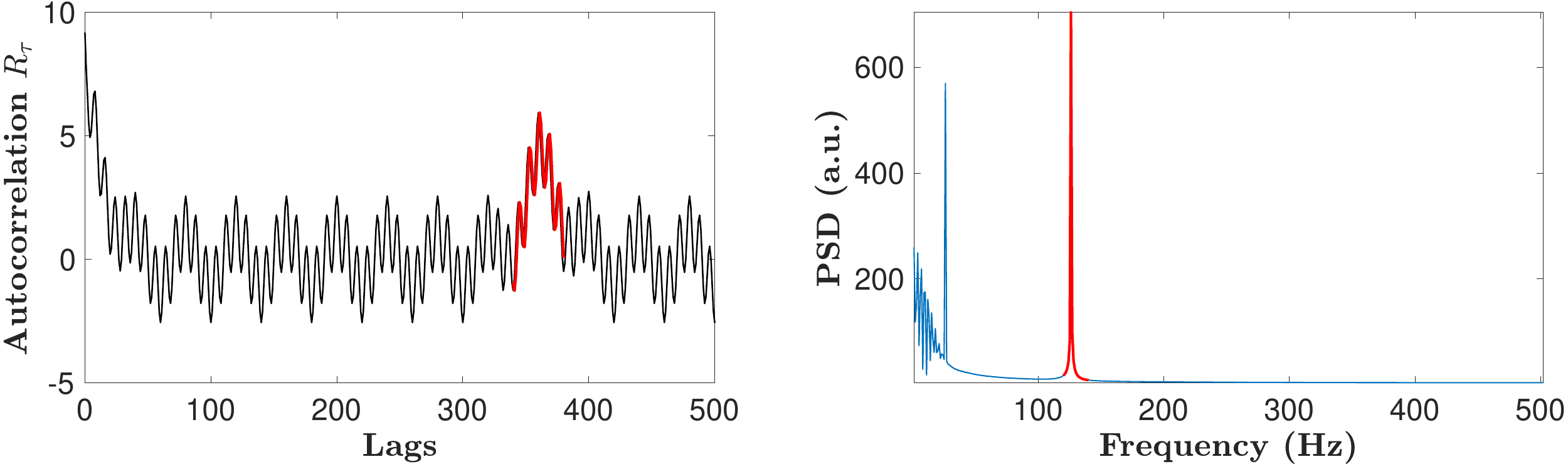}
\caption{Autocorrelation and power spectral density of the synthetic data. The red peak in the power spectral density exists because of special particles, and the red lags in autocorrelation are due to existence of pulses. }
\label{fig::syn_autocorr_psd}
\end{figure}

Figure \ref{fig::synthe_pulse_multi_error} shows the performance of each sketching method on evaluating the power spectral density of the synthetic data set. 
The sketching methods perform well, whereas the classical baseline methods perform so poorly as to be unusable.  For the sketching methods, even when compression is around 1\%, the characteristic peak in the PSD formed by the 3 special particles is still correctly identified, whereas it is completely missed by all 3 classical methods. This is mostly demonstrated by the relative $\ell_\infty$ error which captures the largest discrepancy in PSD evaluation at any frequency.  In fact, all the baseline methods have over 100\% relative error on the $\ell_\infty$ error, regardless of compression.

\begin{figure}[ht]
\centering
\includegraphics[width=.65\textwidth]{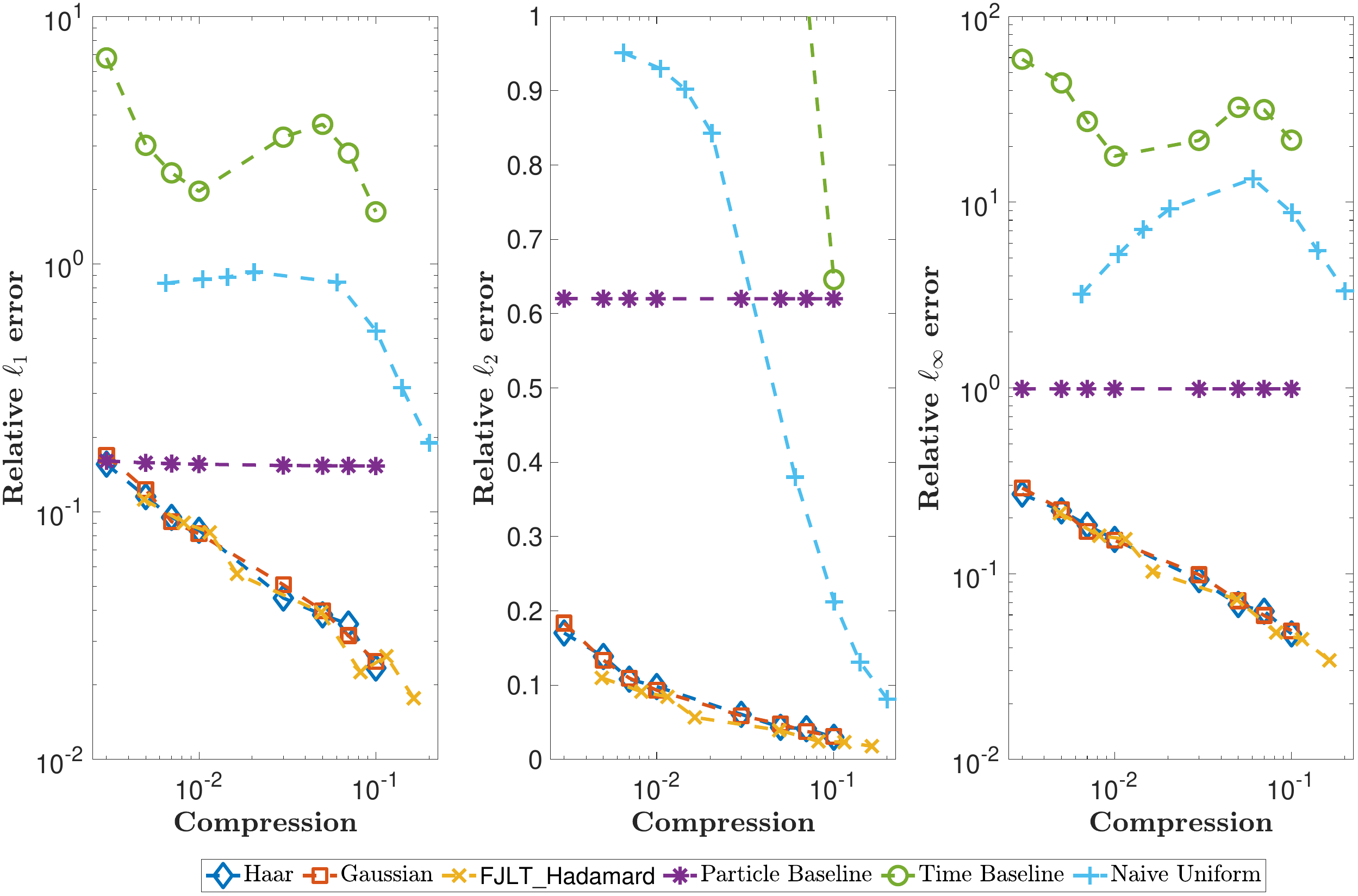}
\caption{Three metrics characterizing accuracy of sketching methods on the PSD in the case of adversarial synthetic data.}
\label{fig::synthe_pulse_multi_error}
\end{figure}

\begin{newstuff}

Figure~\ref{fig::A10_variance} is the same experiment as  Figure \ref{fig::synthe_pulse_multi_error} but also reports information on the variance with respect to the $\ell_1$ errors. Specifically, box plots are shown, with the middle red line showing the median, and the top and bottom of the box are the 75\% and 25\% percentiles, respectively.
The boxes for the sketching approaches appear large, but due to the logarithmic scale of the $y$-axis, there is actually not too much spread.  The time baseline is inaccurate and has large spread; the naive uniform baseline has less spread but is also inaccurate.  The particle baseline shows reasonable good performance for the median, but has worrisome outliers (as indicated by the red $+$ symbols). This is expected for this particular synthetic setup, since the method is reasonable at capturing most of the behavior as long as it does \emph{not} sample one of the three ``special'' particle. In the cases when it does sample a ``special'' particle, the method has no way to know that these particles are rare, so due to the normalization, it heavily weights these particles and incorrectly estimates their effect. These are the outliers shown in the figure, and their effect gets larger as $\gamma\to 0$ since the normalization factor grows.  The variation with respect to the $\ell_\infty$ and $\ell_2$ metrics are similar.

\begin{figure}[ht] 
\centering
\includegraphics[width=.96\textwidth]{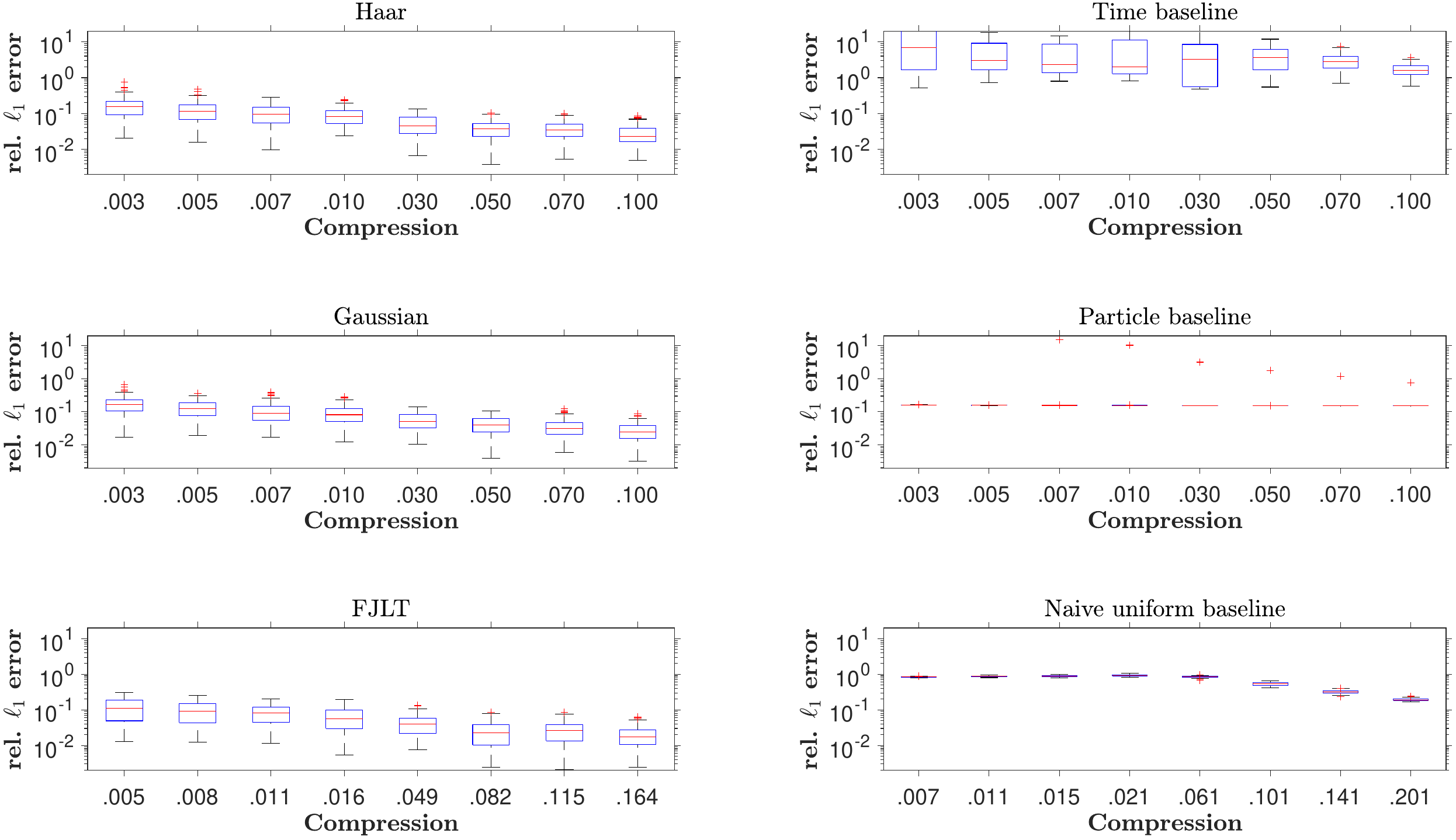}
\caption{\new{Variability of relative $\ell_1$ errors (as reported in Figure \ref{fig::synthe_pulse_multi_error}) due to approximating the PSD for the proposed methods (Haar, Gaussian, and FJLT-Hadamard) compared to baselines, on the methanol data. With respect to each compression mark and each sketching method, the experiment is repeated for 100 trials.}}
\label{fig::A10_variance}
\end{figure}

\end{newstuff}

\subsection{Variance information}
\begin{newstuff}
The following plots show the variation of errors (as reported respectively in Figures \ref{fig::err_multi_MD} and \ref{fig::TvsError_MD}) 
when sketching methods are used to evaluate PSD/autocorrelation. On each box, the central mark indicates the median, and the bottom and top edges of the box indicate the 25th and 75th percentiles, respectively. The whiskers extend to the most extreme data points not considered outliers, and the outliers are plotted individually using the `+' symbol, if any.

We only show data for the relative $\ell_2$ norm errors, but the results for $\ell_1$ and $\ell_\infty$ norm errors are similar.

Fig.~\ref{fig:variance_PSD} shows that for approximating the PSD, the sketching methods have a reasonably small spread. The variance seems to increase as the compression ratio $\gamma \to 0$ which makes sense since there is less averaging when there are fewer samples. The time and naive subsampling baseline methods have reasonably low spread too, but very large errors.  The particle baseline has a large variance in all compression regimes.

Fig.~\ref{fig:variance_autocor} shows the variability when approximating the first 15 lags of the autocorrelation. All methods have somewhat similar variance at a given error level. However, note that the $y$-axis is log scale, so if two boxes seem the same size but one is centered at a lower relative error, then that box represents less spread of the data. Hence we again see the trend that most of the methods have lower variance when there is more data (larger $\gamma$) since they are also more accurate in this regime.
\end{newstuff}

\begin{figure}[ht]
\centering
\includegraphics[width=.96\textwidth]{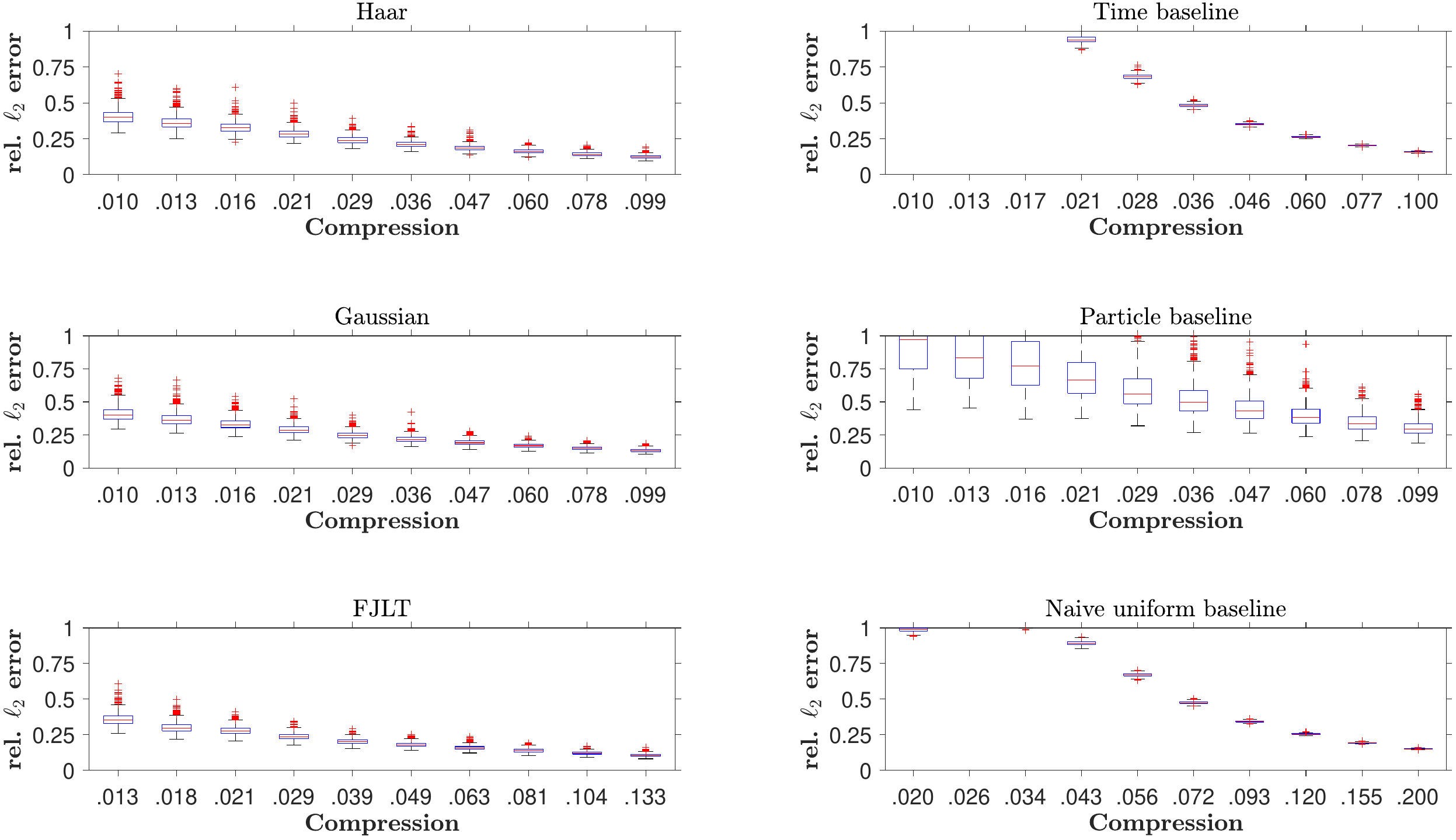}
\caption{\new{Variability of relative $\ell_2$ errors (as reported in Figure \ref{fig::err_multi_MD}) due to approximating the PSD for the proposed methods (Haar, Gaussian, and FJLT-Hadamard) compared to baselines, on the methanol data. With respect to each compression mark and each sketching method, the experiment is repeated for 1000 trials.}}
\label{fig:variance_PSD}
\end{figure}

\begin{figure}[ht]
\centering
\includegraphics[width=.96\textwidth]{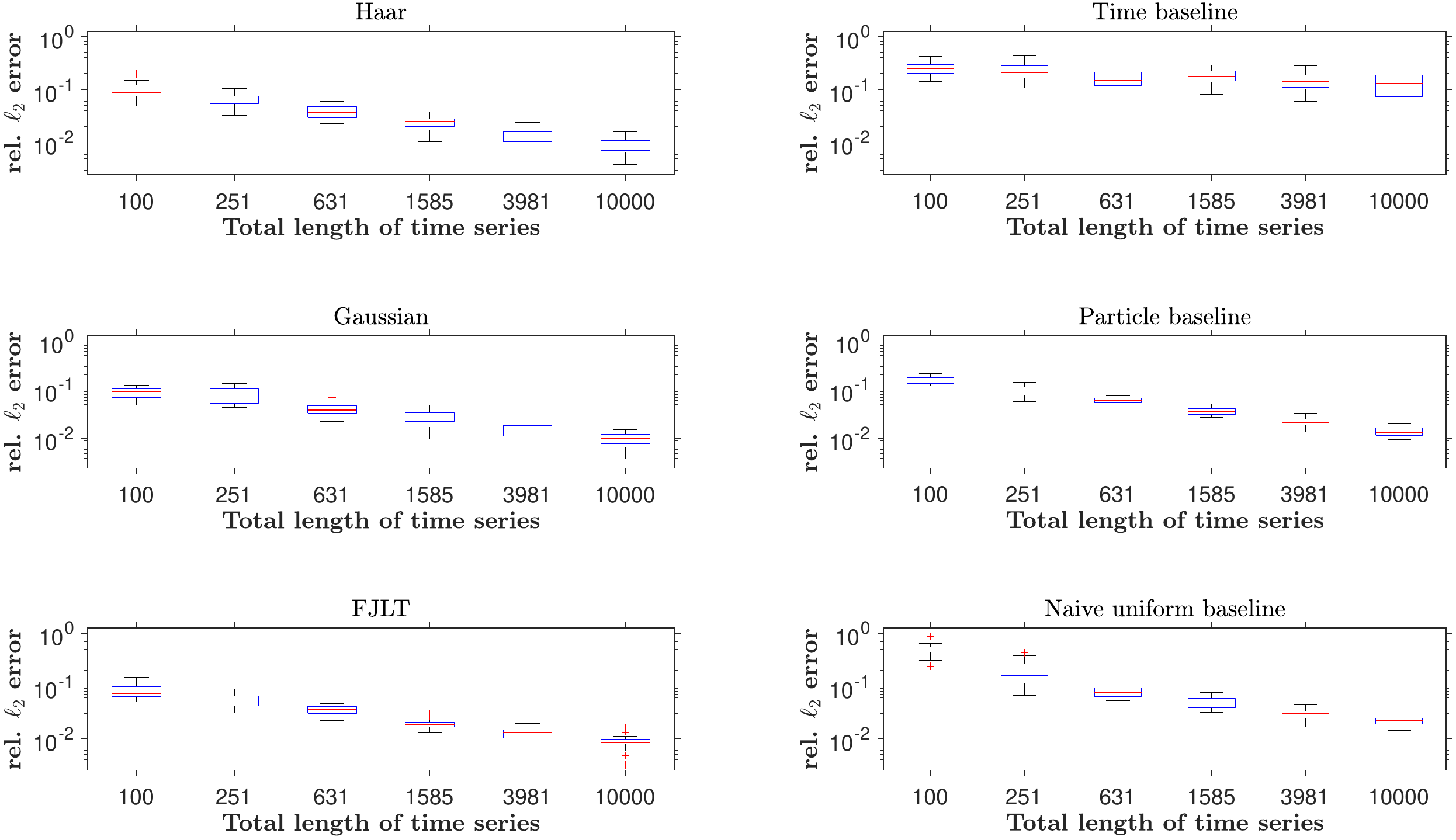}
\caption{\new{Variability of $\ell_2$ error of the estimated autocorrelation of first 15 lags (as reported in Figure \ref{fig::TvsError_MD}) vs.\ total length of time signals. With respect to each fixed length of time series and each sketching method, the experiment is repeated for 20 trials.}}
\label{fig:variance_autocor}
\end{figure}

\end{document}